\documentclass{article}

\usepackage[preprint, nonatbib]{neurips_2026}

\usepackage[utf8]{inputenc} 
\usepackage[T1]{fontenc}    
\usepackage{url}            
\usepackage{booktabs}       
\usepackage{amsfonts}       
\usepackage{nicefrac}       
\usepackage{microtype}      

\usepackage{graphicx}
\usepackage{subcaption}
\usepackage{adjustbox}
\usepackage{booktabs} 
\usepackage{multirow, threeparttable}
\usepackage{wrapfig}
\usepackage{caption}
\usepackage{hhline}
\usepackage{enumitem}
\usepackage{tikz}
\usepackage{tcolorbox}

\usepackage{algorithm2e}
\usepackage{bbm}
\RestyleAlgo{ruled}

\usepackage{pifont}
\usepackage[unicode=true,psdextra]{hyperref}

\usepackage{amsmath}
\usepackage{amssymb}
\usepackage{mathtools}
\usepackage{amsthm}
\usepackage{siunitx}
\usepackage{xspace}
\usepackage{tabularx}
\usepackage[square,numbers]{natbib}
\theoremstyle{plain}
\newtheorem{theorem}{Theorem}[section]
\newtheorem{proposition}[theorem]{Proposition}

\newtheorem{corollary}[theorem]{Corollary}
\theoremstyle{definition}

\theoremstyle{remark}

\newcommand{\diff}{\mathop{}\!\mathrm{d}}

\definecolor{lightblue}{HTML}{DAE8FC}
\definecolor{darkblue}{HTML}{6C8EBF}
\definecolor{orange}{HTML}{fdae6b}
\definecolor{darkorange}{HTML}{8c2d04}

\DeclareRobustCommand\circled[1]{\tikz[baseline=(char.base)]{\node[shape=circle,draw=darkblue,minimum size=0.35cm,inner sep=0pt,fill=lightblue] (char) {\fontfamily{phv}\selectfont \scriptsize \textbf{#1}};}}

\DeclareRobustCommand\bigcircle[1]{\tikz[baseline=(char.base)]{\node[shape=circle,draw=darkblue,minimum size=0.45cm,inner sep=0pt,fill=lightblue] (char) {\fontfamily{phv}\selectfont \normalsize \textbf{#1}};}}

\newcommand{\rqtag}[2]{%
\tikz[baseline=(rq.base)]\node[
    inner sep=2.0pt,
    rounded corners=2.5pt,
    draw=#1,
    line width=0.8pt,
    fill=#1!12,
    text=#1!85!black,
    font=\bfseries\footnotesize
](rq){RQ#2};%
}

\definecolor{rq1}{HTML}{4A69BD} 
\definecolor{rq2}{HTML}{60A3BC} 
\definecolor{rq3}{HTML}{6A89CC} 
\definecolor{rq4}{HTML}{8E44AD} 
\definecolor{rq5}{HTML}{F6B93B} 
\definecolor{rq6}{HTML}{E58E26} 
\definecolor{rq7}{HTML}{B8E994} 
\definecolor{rq8}{HTML}{82CCDD} 
\definecolor{rq9}{HTML}{636E72} 

\newcommand{\framework}{\textsc{OrthoBO}\xspace}

\newtcolorbox{mybox}{colback=white,colframe=darkblue}

\newcommand*\samethanks[1][\value{footnote}]{\footnotemark[#1]}

\title{\framework: Orthogonal Bayesian Hyperparameter Optimization}

\author{%
  Maresa Schröder\thanks{Equal contribution} \\
  LMU Munich\\
  Munich Center for Machine Learning\\
  \texttt{maresa.schroeder@lmu.de} 
  \And
  Pascal Janetzky\samethanks{} \\
  Bosch Center for Artificial Intelligence \\
  Munich Center for Machine Learning\\
  \texttt{pascal.janetzky@bosch.com} 
  \AND
  Michael Klar \\
  Bosch Center for Artificial Intelligence \\
  \texttt{michael.klar@bosch.com} \\
  \And
  Stefan Feuerriegel \\
  LMU Munich\\
  Munich Center for Machine Learning\\
  \texttt{feuerriegel@lmu.de} 
}

\begin{document}

\maketitle


\begin{abstract}
Bayesian optimization is widely used for hyperparameter optimization when model evaluations are expensive; however, noisy acquisition estimates can lead to unstable decisions. We identify acquisition estimation noise as a failure mode that was previously overlooked: even when the surrogate model and acquisition target are correctly specified, finite-sample Monte Carlo error can perturb acquisition values. This can, in turn, flip candidate rankings and lead to suboptimal BO decisions. As a remedy, we aim at variance reduction and propose an \emph{orthogonal acquisition estimator} that subtracts an optimally weighted score-function control variate, which yields an acquisition residual \emph{orthogonal} to posterior score directions and which thus reduces Monte Carlo variance. We further introduce \framework: a Bayesian optimization framework that combines our orthogonal acquisition estimator with ensemble surrogates and an outer log transformation. We show theoretically that our estimator preserves the target, leads to variance reduction, and improves pairwise ranking stability. We further verify the theoretical properties of \framework through numerical experiments where our framework reduces acquisition estimation variance, stabilizes candidate rankings, and achieves strong performance. We also demonstrate the downstream utility of \framework in hyperparameter optimization for neural network training and fine-tuning. 
\end{abstract}


\section{Introduction}

Hyperparameter optimization (HPO) is a fundamental component of modern machine learning systems, and has been used across a wide range of domains, including biology \citep{ban2017efficient,quitadadmo2017bayesian}, chemistry \citep{yuan2021systematic}, manufacturing \citep{albahar2021robust}, medicine \citep{nath2021power}, and physics \citep{tani2024comparison}. Because evaluating hyperparameter candidates can require substantial computation, \emph{Bayesian optimization (BO)} is frequently used to identify optimal hyperparameter configurations under limited evaluation budgets (e.g., \cite[][]{Feurer.2015, Shahriari.2016, Snoek.2012, Turner.2020, Wu.2020}). BO fits a probabilistic surrogate model of the objective and selects new configurations by maximizing an \emph{acquisition function} such as expected improvement (EI) \citep{Jones.1998}.\footnote{For a thorough introduction to BO, see, e.g., \cite[][]{Frazier.2018, Shahriari.2016}.} 

In our work, we focus on a source of instability that is central to BO decisions but has received comparatively little attention: \emph{the estimation of the acquisition value itself}. Acquisition functions are often treated as deterministic once a surrogate model has been fitted. In many practical BO pipelines, however, the acquisition value is obtained by marginalizing over uncertain surrogate parameters, aggregating across surrogate models, or using Monte Carlo (MC) approximations (e.g., \cite[][]{Bodin.2020, Hernandez.2015,Polyzos.2023, Snoek.2012}). As a result, the quantity used to rank candidate configurations is itself a noisy estimate. This may hurt the performance of BO because BO acts on \emph{rankings}: even small estimation errors can flip the order of two candidates with similar acquisition values and lead the BO algorithm to evaluate many suboptimal configurations (see Fig.~\ref{fig:motivation}).

At a technical level, we treat the estimation of the acquisition function as a MC estimation problem \cite[e.g.,][]{Bodin.2020, Hernandez.2014,Hernandez.2015,Snoek.2012} and propose a novel \emph{orthogonalized estimator of the acquisition value} that serves as a variance-reduction technique, thus improving robustness to sampling errors and stabilizing acquisition rankings. Our notion of orthogonality stems from the orthogonal ML literature (e.g., \cite[][]{Chernozhukov.2018, Foster.2023, Kennedy.2024, Mackey.2018}) in that our estimator offers robustness due to \emph{variance reduction} guarantees Note that our notion of robustness differs from that in existing `robust' regression methods, which aim to protect the surrogate model against corrupted or heavy-tailed observations (e.g., \cite[][]{Martinez.2018}).
\begin{wrapfigure}[21]{r}{0.45\textwidth} 
    \vspace{-0.3cm}
    \centering
    \includegraphics[width=1\linewidth]{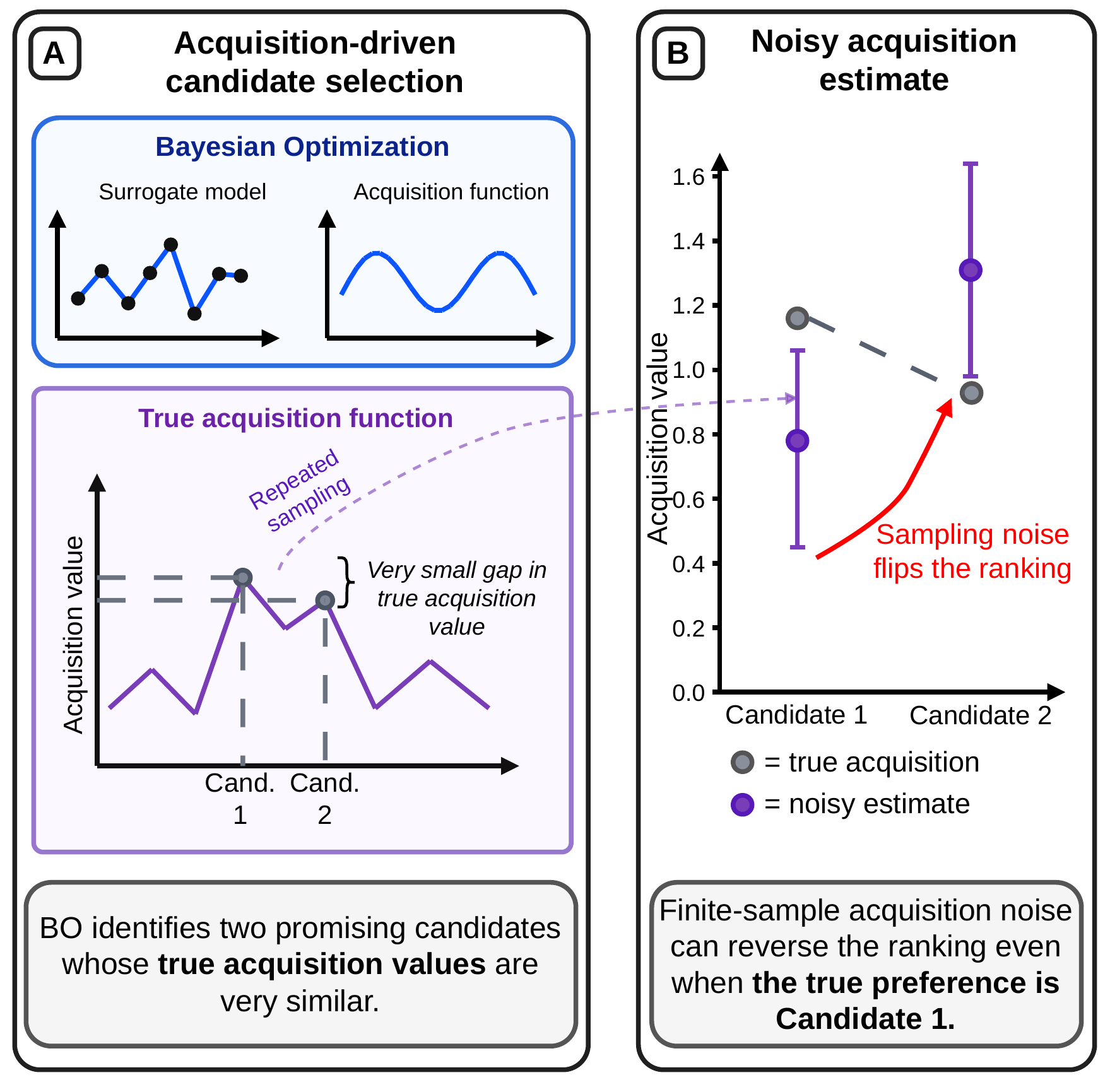}
    \vspace{-0.6cm}
    \caption{\textbf{Failure mode in Bayesian HPO:} estimation noise in the acquisition value can flip rankings and lead to suboptimal configurations.}
    \label{fig:motivation}
\end{wrapfigure}

\vspace{-0.4cm}
\textbf{Example.} Consider tuning a classification model on a large-scale image dataset, where each evaluation requires training on many images and assessing performance under an imbalanced class distribution. Two hyperparameter configurations may differ only slightly (e.g., a deeper network with stronger dropout versus a shallower network with weaker regularization). If estimates of the acquisition values are subject to noise, BO may spend a suboptimal configuration simply because it was ranked first due to sampling noise.

Our contribution is complementary to existing work on robust BO. Prior methods have mainly focused on improving the surrogate model, for example by accounting for kernel misspecification, hyperparameter uncertainty, unreliable posterior uncertainty, trust-region localization, or ensemble surrogates \citep{Berkenkamp.2019,Bogunovic.2021,Eriksson.2019,Lu.2023,Neiswanger.2021,Polyzos.2023,Turner.2020}. Other work has improved the acquisition formula or its numerical optimization \citep{Ament.2023}.
However, these approaches do \textbf{not} directly address the \emph{variance of the estimated acquisition values} that drive BO decisions.
In contrast, our method improves the statistical quality of the estimated acquisition values used for candidate selection through a novel variance reduction technique based on orthogonal scores.

\textbf{\framework:} We build on our orthogonal acquisition estimator and introduce \framework, a BO framework for robust hyperparameter optimization. \framework uses our orthogonalized acquisition estimation as a core component and combines it with ensemble surrogates \cite{Lu.2023, Polyzos.2023} and an outer log transformation \citep{Ament.2023}. Here, each component serves a distinct role: our orthogonalization stabilizes acquisition estimation, ensembling addresses structural misspecification and the outer log transformation improves numerical conditioning during acquisition maximization. 

Our \framework offers several theoretical guarantees. (i)~Our proposed orthogonalized acquisition estimator is unbiased for the marginal EI target and achieves lower variance than the corresponding na{\"i}ve MC estimator ($\rightarrow$ Theorem~\ref{thm:variance_reduction}). (ii)~We further show that the lower acquisition estimation variance improves the stability of pairwise acquisition rankings ($\rightarrow$ Proposition~\ref{prop:ranking}). Intuitively, this reduces the probability of falsely selecting a candidate based on MC noise, and leads thus to performance improvements in practice. 

Our \textbf{main contributions} are: \textbf{(1)}~We propose a novel, orthogonalized estimator for hyperparameter acquisition, reducing estimation variance without changing the underlying acquisition target. \textbf{(2)}~We develop \framework, a BO framework that explicitly targets acquisition estimation noise, structural surrogate misspecification, and numerical instabilities. \textbf{(3)}~We provide theoretical guarantees for variance reduction and ranking stability, and demonstrate improved empirical performance on various HPO benchmarks. Importantly, \framework is broadly applicable to essentially all BO problems.


\section{Related work\protect\footnote{We provide an extended related work in Supplement~\ref{sec:appendix_related_work}. Therein, we also provide a broader review of BO under different types of misspecification and orthogonal ML.}}
\label{sec:related_work}

\textbf{BO for hyperparameter tuning.}
BO is a standard approach for HPO (e.g., \cite[][]{Feurer.2015, Snoek.2012, Wu.2020}), particularly when evaluations of the objective function are expensive. In practice, BO relies on a surrogate model to approximate the objective and guide the search. A common choice for the surrogate model are Gaussian processes (GPs) \cite{Ha.2023, Jones.1998, Snoek.2012}; other alternatives include tree-based and density-based surrogates such as tree-structured Parzen estimators (TPE) \cite{Bergstra.2011, Bergstra.2013, Watanabe.2023}. Subsequent work has addressed scaling challenges through multi-fidelity methods, batch BO, and trust-region approaches (e.g., \cite[][]{Eriksson.2019, Falkner.2018, Gonzalez.2016, Wilson.2018}). \emph{However, no method has explicitly studied or addressed acquisition estimation noise as a failure mode.}

\textbf{Acquisition functions.}
Surrogate models are used by acquisition functions, which are heuristics to evaluate the utility of hyperparameter candidates before querying the (expensive) objective function \citep{Balandat.2020}. The performance of the acquisition function thus directly influences the overall performance. In the literature, several types of acquisition functions or acquisition frameworks have been proposed \cite{Ament.2023,Ament.2024,Eriksson.2019,moss2021gibbon,wang2017max,wilson2017reparameterization}. Prominent examples include, e.g., qLogEI \citep{Ament.2023}, TuRBO \citep{Eriksson.2019}, and UCB \citep{wilson2017reparameterization}. \emph{Our work is orthogonal to these works, as we focus on improving the \textbf{estimation} and \textbf{not} the acquisition function itself.\footnote{We provide a generalization of \framework at the acquisition function level in Supplement~\ref{sec:appendix_constrained_parallel}.}}

\textbf{Robustness to surrogate misspecification.}
A central weakness of BO is that it relies on a surrogate model which might be misspecified. Prior work has studied BO under misspecified kernel classes \cite{Bogunovic.2021}, unknown hyperparameters \cite{Berkenkamp.2019, Ha.2023}, unreliable posterior uncertainty \cite{Neiswanger.2021}, and a surrogate complexity-efficiency trade-off \cite{Bodin.2020}. Several practical strategies have been developed to mitigate such failure modes \citep{Ament.2024,Eriksson.2019,Lu.2023,Martinez.2018,Polyzos.2023}
\emph{However, these approaches address {structural} surrogate misspecification  and thus model uncertainty. In contrast, our method additionally targets estimation noise.}

\textbf{Research gap.} We identify acquisition estimation errors as a crucial failure mode in BO that has been previously overlooked. To address this, we propose an orthogonalized estimator aimed at variance reduction and that stabilizes acquisition-based decisions. To the best of our knowledge, this is the first work to explicitly formulate acquisition value estimation in BO as a variance reduction problem and to develop an orthogonalized estimator for improving acquisition-based decisions.


\section{Problem setting}
\label{sec:setting}

\textbf{Setup.}
We consider the standard setup for HPO over a search space $\Lambda \subseteq \mathbb{R}^d$ \citep{Ament.2024, eriksson2021scalable}. For a hyperparameter configuration $\lambda \in \Lambda$, let $f(\lambda)$ denote the unknown performance of interest, such as, e.g., the validation loss or validation error after training a model with hyperparameters $\lambda$. The objective $f$ is expensive to evaluate and can only be observed through noisy evaluations $y = f(\lambda) + \epsilon$, where $\epsilon$ captures evaluation noise. At each iteration $t = 1,\dots,T$, the available data are $\mathcal{D}_t = \{(\lambda_i, y_i)\}_{i=1}^t$. We aim to sequentially choose hyperparameter configurations $\lambda_1, \lambda_2, \dots, \lambda_T$ to identify a high-performing configuration within an evaluation budget $T$.

\textbf{Recap: BO.}
The central idea of BO is to maintain a probabilistic model of the unknown objective and to use it to guide future evaluations \citep{frazier2018tutorial}.  At iteration $t =1,\ldots,T$, BO fits a probabilistic \emph{surrogate model} $m$ to the current dataset and uses an \textit{acquisition function} $\alpha$ to select the next evaluation point \citep{Ament.2024}. The surrogate provides two crucial quantities for decision-making: (i)~a prediction of objective values in unexplored regions, and (ii)~a measure of uncertainty about those predictions. BO then combines these two ingredients through $\alpha$, which scores candidate points according to their potential utility for optimization. After selecting a new candidate point, the objective $f$ is evaluated, and the new observation is added to the dataset $\mathcal{D}$. The surrogate is then updated, and the process repeats. In this way, BO adaptively concentrates evaluations in regions that are both informative and promising. As a result, the performance of BO depends on two components: the quality of the surrogate model and the stability of the acquisition values used to rank candidate configurations.

\textbf{Surrogate model.}
Let $m$ index a surrogate model, and $\theta_m$ denote its latent parameters. Conditional on $\theta_m$, the surrogate induces a posterior predictive distribution for the objective $f(\lambda)$ given $\mathcal{D}_t$. In practice, the parameters $\theta_m$ are not known and must themselves be inferred from data. 

In our work, we focus on \emph{expected improvement (EI)} \citep{Jones.1998} as our acquisition function. For a surrogate model $m$ with parameters $\theta_m$, we define
\begin{equation}\label{eq:EI}
\mathrm{EI}_m(\lambda_t; \theta_m)
= \mathbb{E}\big[(f^\star - f(\lambda_t))_+ \mid \mathcal{D}_t, \theta_m\big],
\end{equation}
where $f^\star = \min_{i \le t} y_i$ is the currently best observed value up to iteration $t$ and $(x)_{+}:=\max\{0,x\}$ \cite{Jones.1998,Snoek.2012}. In practice, the model parameters are not fixed but follow an approximate posterior $\theta_m \sim q_{m,t}(\theta)$.
The marginal EI over the parameters is then given by
\begin{equation}\label{eq:marg_EI}
\mathrm{EI}_m^{\mathrm{marg}}(\lambda_t)
= \mathbb{E}_{\theta_m \sim q_{m,t}}[\mathrm{EI}_m(\lambda_t; \theta_m)],
\end{equation}
which is commonly approximated through MC samples $s=1,\ldots,S$ given a specific MC sampling budget $S$ \cite{Balandat.2020,Snoek.2012, Wilson.2018}, i.e.,
\vspace{-0.2cm}
\begin{equation}\label{equation:ei_mc}
\widehat{\mathrm{EI}}_m^{\mathrm{MC}}(\lambda_t) = \frac{1}{S} \sum_{s=1}^S \mathrm{EI}_m(\lambda_t; \theta_m^{(s)}), \quad \theta_m^{(s)} \sim q_{m,t}.
\end{equation}

\vspace{-0.4cm}
However, $\widehat{\mathrm{EI}}_m^{\mathrm{MC}}$ suffers from high variance for small budgets $S$ and sensitivity to approximation errors in the posterior $q_{m,t}$, which leads to instability in the estimated acquisition and thus suboptimal BO decisions.
 
\textbf{Finite-sample acquistion estimation noise.}\footnote{We discuss structural surrogate misspecification due to improper surrogate model classes and numerical instabilities due to small or skewed acquisition values in Supplement~\ref{sec:appendix_bo_failure_modes}.} Bayesian HPO performance depends on the stability of acquisition values used to rank candidate configurations. In practice, these values are often obtained by marginalizing over surrogate uncertainty and must be approximated from a finite number of samples. As a result, the acquisition function itself becomes a noisy estimate: even when the surrogate model is reasonable, finite-sample MC error can perturb acquisition values, flip candidate rankings, and thus lead BO to evaluate suboptimal configurations. We identify this \emph{acquisition estimation noise} as a overlooked but practically important failure mode.

\textbf{Our work.} 
In this work, we address instability in BO at the level of the acquisition estimate itself. Even with a reasonable surrogate family, BO may still perform poorly if the acquisition estimator used to rank candidate points is noisy, sensitive to approximate inference, or numerically ill-conditioned. 

We treat the EI-scale marginal acquisition obtained by integrating over surrogate uncertainty as an MC estimation problem. We then propose an orthogonal acquisition estimator that reduces variance induced by surrogate uncertainty, while preserving the underlying EI-scale target. Therein, we subtract an optimally weighted score-function control variate, which yields an acquisition residual \emph{orthogonal} to posterior score directions and which thus reduces MC variance. In \framework, we further combine our orthogonality acquisition estimator for variance reduction with an ensemble of surrogate models and adaptive model weighting optimized through an outer log transformation to obtain a full, state-of-the-art BO framework. 

\vspace{-0.2cm}
\section{Orthogonal Bayesian optimization}
\label{sec:method}
\vspace{-0.2cm}

\begin{figure}[h]
    \centering
    \includegraphics[width=1\linewidth]{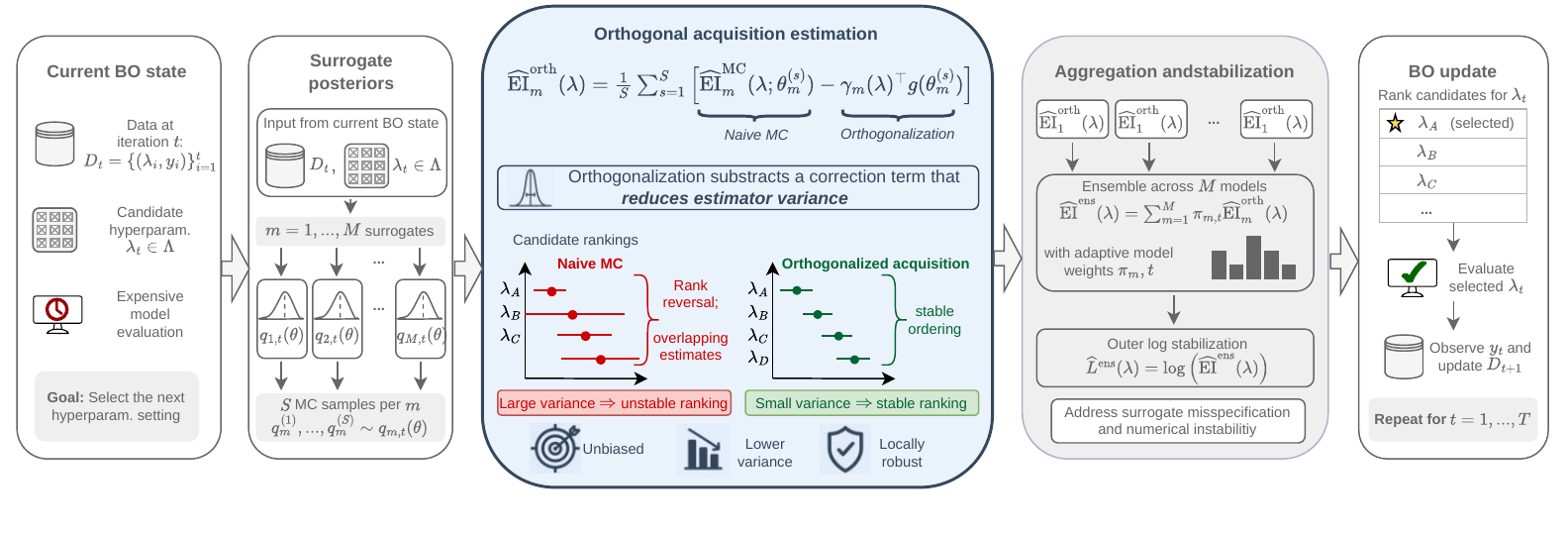}
    \vspace{-.7cm}
    \caption{\textbf{Our proposed \framework.} It improves acquisition estimation through \emph{variance reduction}. For this, we orthogonalize the acquisition estimation \emph{without} changing the acquisition target.}
    \label{fig:framework}
    \vspace{-0.2cm}
\end{figure}

Below, we present \framework, our orthogonalized framework for BO that addresses the failure modes discussed above (an overview is shown in Fig.~\ref{fig:framework}). We first present our \emph{main contribution}: the \emph{orthogonal acquisition estimator} (Section~\ref{sec:orthogonal_acquisition_estimator}). Of note, we do \emph{not} change the acquisition target, but rather improve the estimation through \emph{variance reduction} techniques. Then, we provide example instantiations of our orthogonal framework for two common surrogate classes (Section~\ref{sec:surrogate_instantiations}). 
We then present \framework, where we embed our orthogonal acquisition estimator inside a state-of-the-art BO algorithm (Section~\ref{sec:optimization_procedure}).  
In the main text, we focus on EI-based acquisition functions in order to keep the notation and theoretical arguments simple. The key object throughout is the marginal EI under surrogate uncertainty and ensemble aggregation. We provide extensions to constrained and parallel acquisition functions in Supplement~\ref{sec:appendix_constrained_parallel}.

\subsection{Orthogonal acquisition estimator}
\label{sec:orthogonal_acquisition_estimator}

We now address the instability due to acquisition estimation noise. Our key idea is to introduce an \emph{orthogonalized acquisition functional} that shares the same marginal target as the original EI but has reduced variance under $q_{m,t}$. Let $g(\theta) := \nabla_\theta \log q_{m,t}(\theta)$ denote the posterior score function.
\begin{mybox}
\vspace{-0.2cm}
We define the \emph{orthogonalized EI} as
\vspace{-0.1cm}
\begin{equation}\label{eq:ortho_EI_functional}
    \mathrm{EI}_m^{\mathrm{orth}}(\lambda; \theta)
    \;:=\; \mathrm{EI}_m(\lambda; \theta)
    - \gamma_m(\lambda)^\top g(\theta),
\end{equation}
where $\gamma_m(\lambda) := \Sigma_g^{-1}\,\operatorname{Cov}_{q_{m,t}}\!\big(g,\,\mathrm{EI}_m(\lambda;\cdot)\big)$ and $\Sigma_g := \operatorname{Cov}_{q_{m,t}}(g, g)$. The corresponding marginal acquisition is
\vspace{-0.2cm}
\begin{equation}\label{eq:ortho_EI_marg}
    \mathrm{EI}_m^{\mathrm{orth, marg}}(\lambda)
    \;:=\; \mathbb{E}_{\theta \sim q_{m,t}}\!\left[\mathrm{EI}_m^{\mathrm{orth}}(\lambda; \theta)\right].
\end{equation}
The associated MC estimator is \vspace{-0.2cm}
\begin{equation}\label{eq:ortho_EI}
    \widehat{\mathrm{EI}}_m^{\mathrm{orth}}(\lambda)
    = \frac{1}{S}\sum_{s=1}^S \mathrm{EI}_m^{\mathrm{orth}}(\lambda; \theta_m^{(s)}),
    \qquad \theta_m^{(s)} \sim q_{m,t}.
\end{equation}
\vspace{-0.5cm}
\end{mybox}
Of note, the construction is non-trivial because $g(\theta)$ is based on the \emph{parameter posterior}, whereas common MC pipelines sample only from the predictive posterior over function values. We therefore draw parameter samples explicitly and evaluate both $\mathrm{EI}_m$ and $g$ at the same $\theta_m^{(s)}$. In practice, $\gamma_m(\lambda)$ is unknown and we use the empirical plug-in $\hat\gamma_m(\lambda)$ estimated from the same MC samples; this introduces an in-sample bias of order $O(1/S)$ that is dominated by the $O(1/\sqrt{S})$ sampling standard deviation at the MC budgets used in our experiments.\footnote{Sample-splitting (cross-fitting) would restore exact finite-sample unbiasedness without changing the leading asymptotic variance; see Supplement~\ref{sec:appendix_proofs}.}

\textbf{Properties of the orthogonalized acquisition.} The orthogonalized EI satisfies three key properties: (i)~\emph{preserved target}: it has the same marginal as the original EI, so the BO target is unchanged; (ii)~\emph{variance reduction}: it has smaller variance under $q_{m,t}$; and (iii)~\emph{local robustness}: it is first-order insensitive to score-tilt perturbations of $q_{m,t}$.

\textbf{Regularity conditions for the score function.}
Throughout our analysis, we assume that the surrogate parameter distribution $q_{m,t}$ has a differentiable density on a support $\Theta \subseteq \mathbb{R}^{d_\theta}$ and that the corresponding boundary term vanishes, i.e., $\int_{\Theta} \nabla_\theta q_{m,t}(\theta)\, \diff \theta = 0.$
This holds, for example, if $\Theta=\mathbb{R}^{d_\theta}$ and $q_{m,t}(\theta)\to 0$ sufficiently fast as $\|\theta\|\to\infty$.

\begin{theorem}[Variance reduction]
\label{thm:variance_reduction}
Assume that $\mathbb{E}_{q_{m,t}}[\mathrm{EI}_m(\lambda;\theta)^2] < \infty$ and $\mathbb{E}_{q_{m,t}}[\|g(\theta)\|_2^2] < \infty$, that $\Sigma_g$ is nonsingular, and that the standard score-function regularity conditions hold for $q_{m,t}$. Then:

\textbf{(i) Preserved target.}
$\mathbb{E}[\mathrm{EI}_m^{\mathrm{orth}}(\lambda; \theta)] = \mathrm{EI}_m^{\mathrm{marg}}(\lambda; \theta)$.

\textbf{(ii) Variance reduction.} Under $q_{m,t}$, we achieve
\begin{equation}
    \operatorname{Var}\!\big(\mathrm{EI}_m^{\mathrm{orth}}(\lambda; \theta)\big)
    = \operatorname{Var}\!\big(\mathrm{EI}_m(\lambda; \theta)\big)
    - \operatorname{Cov}(g, \mathrm{EI}_m)^\top \Sigma_g^{-1} \operatorname{Cov}(g, \mathrm{EI}_m) \leq \operatorname{Var}\!\big(\mathrm{EI}_m(\lambda; \theta)\big).
\end{equation}
\end{theorem}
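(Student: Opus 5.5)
Part (i) reduces to the score identity $\mathbb{E}_{q_{m,t}}[g(\theta)]=0$. Write $g(\theta)\,q_{m,t}(\theta)=\nabla_\theta q_{m,t}(\theta)$ on the support $\Theta$. Then $\mathbb{E}_{q_{m,t}}[g]=\int_\Theta \nabla_\theta q_{m,t}(\theta)\diff\theta$, and this vanishes by the boundary condition stated in the regularity assumptions. Integrability of $g$ is guaranteed because $\mathbb{E}\|g\|_2^2<\infty$ implies $\mathbb{E}\|g\|_2<\infty$ (Cauchy--Schwarz). Since $\gamma_m(\lambda)$ is a fixed vector that does not depend on $\theta$, linearity gives $\mathbb{E}[\mathrm{EI}_m^{\mathrm{orth}}(\lambda;\theta)]=\mathbb{E}[\mathrm{EI}_m(\lambda;\theta)]-\gamma_m(\lambda)^\top\mathbb{E}[g]=\mathrm{EI}_m^{\mathrm{marg}}(\lambda)$, which is the definition in \eqref{eq:marg_EI}. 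I will also note that $\mathrm{EI}_m^{\mathrm{orth,marg}}=\mathrm{EI}_m^{\mathrm{marg}}$. Consequently the MC estimator \eqref{eq:ortho_EI} with the oracle $\gamma_m$ is unbiased.

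For part (ii), I first check that all second moments are finite. $\mathrm{EI}_m$ and $g$ are both square-integrable, so $\operatorname{Cov}(g,\mathrm{EI}_m)$ is well defined by Cauchy--Schwarz. Writing $c:=\operatorname{Cov}(g,\mathrm{EI}_m)$ and $\gamma=\Sigma_g^{-1}c$ (nonsingularity of $\Sigma_g$ is assumed), bilinearity of variance gives
\begin{equation*}
\operatorname{Var}(\mathrm{EI}_m-\gamma^\top g)=\operatorname{Var}(\mathrm{EI}_m)-2\gamma^\top c+\gamma^\top\Sigma_g\gamma .
\end{equation*}
Substituting $\gamma=\Sigma_g^{-1}c$ and using symmetry of $\Sigma_g^{-1}$, this becomes $\operatorname{Var}(\mathrm{EI}_m)-2c^\top\Sigma_g^{-1}c+c^\top\Sigma_g^{-1}c=\operatorname{Var}(\mathrm{EI}_m)-c^\top\Sigma_g^{-1}c$, which is the claimed equality.

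The inequality then follows because $\Sigma_g$ is a covariance matrix, hence positive semidefinite, and nonsingularity makes it positive definite. Its inverse is therefore positive definite, so $c^\top\Sigma_g^{-1}c\ge 0$, with equality iff $c=0$. I will also remark that $\gamma$ is the minimizer of the convex quadratic $\gamma\mapsto\operatorname{Var}(\mathrm{EI}_m-\gamma^\top g)$. This gives the interpretation that the residual is $L^2$-orthogonal to the span of the score coordinates: $\operatorname{Cov}(g,\mathrm{EI}_m^{\mathrm{orth}})=c-\Sigma_g\gamma=0$. Dividing by $S$ transfers both statements to the estimator \eqref{eq:ortho_EI}.

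The only step that is not pure algebra is the zero-mean score identity. To make it rigorous, I need the boundary-vanishing assumption together with absolute integrability of $\nabla_\theta q_{m,t}$, which comes from $\mathbb{E}\|g\|<\infty$. I also need to handle points where $q_{m,t}=0$, which I will do by working on the support $\Theta$ where $g$ is defined.

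Finally, I will note a caveat about statement (i). Its right-hand side is written as $\mathrm{EI}_m^{\mathrm{marg}}(\lambda;\theta)$, but it should be read as $\mathrm{EI}_m^{\mathrm{marg}}(\lambda)$, since the expectation integrates $\theta$ out.
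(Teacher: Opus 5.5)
Your proposal is correct and follows the paper's proof: the zero-mean score identity $\int_\Theta \nabla_\theta q_{m,t}\,\diff\theta = 0$ gives (i) by linearity, and expanding the quadratic $\operatorname{Var}(\mathrm{EI}_m - a^\top g)$ at $a = \Sigma_g^{-1}\operatorname{Cov}(g,\mathrm{EI}_m)$ gives (ii), with the inequality following from positive definiteness of $\Sigma_g^{-1}$. Your added integrability remark ($\mathbb{E}\|g\|_2 < \infty$ ensures $\nabla_\theta q_{m,t}$ is absolutely integrable) and your reading of the right-hand side of (i) as $\mathrm{EI}_m^{\mathrm{marg}}(\lambda)$ both match the paper's appendix restatement.
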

\vspace{-0.4cm}
\begin{proof}
We provide a proof in Supplement~\ref{sec:appendix_proofs}.
\end{proof}
\vspace{-0.2cm}
Due to the orthogonal score construction, our estimator is further locally robust to perturbations in the score-tilt direction.
\begin{corollary}[Local robustness.] 
\label{cor:robustness}
Assume the conditions of Theorem~\ref{thm:variance_reduction}. For any $b \in \mathbb{R}^{d_\theta}$, consider the tilted family $q_{m,t}^{(\varepsilon)}(\theta) \propto q_{m,t}(\theta)\exp\!\big(\varepsilon\, b^\top g(\theta)\big)$ for small $|\varepsilon|$. Holding $\gamma_m(\lambda)$ fixed at its $\varepsilon=0$ value, it holds that
\vspace{-0.2cm}
\begin{equation}
    \left.\frac{\diff}{\diff\varepsilon}\, \mathbb{E}_{q_{m,t}^{(\varepsilon)}}\!\left[\mathrm{EI}_m^{\mathrm{orth}}(\lambda; \theta)\right]
    \right|_{\varepsilon=0}
    \;=\; 0.
\end{equation}
\end{corollary}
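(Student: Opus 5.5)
The plan is to differentiate the tilted expectation directly and recognise the derivative at $\varepsilon=0$ as a covariance under $q_{m,t}$. By the definition of $\gamma_m(\lambda)$, the orthogonalized functional $\mathrm{EI}_m^{\mathrm{orth}}$ has zero covariance with every score direction $b^\top g$.

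\textbf{Step 1 (write the tilted expectation).} Fix $\lambda$ and $b$, and abbreviate $h(\theta):=\mathrm{EI}_m^{\mathrm{orth}}(\lambda;\theta)=\mathrm{EI}_m(\lambda;\theta)-\gamma_m(\lambda)^\top g(\theta)$, with $\gamma_m(\lambda)$ frozen at its $\varepsilon=0$ value. Set $Z(\varepsilon):=\mathbb{E}_{q_{m,t}}[\exp(\varepsilon\, b^\top g(\theta))]$. Then
\begin{equation*}
\mathbb{E}_{q_{m,t}^{(\varepsilon)}}[h]=\frac{\mathbb{E}_{q_{m,t}}\big[h(\theta)\exp(\varepsilon\, b^\top g(\theta))\big]}{Z(\varepsilon)},
\end{equation*}
and $Z(0)=1$.

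\textbf{Step 2 (differentiate at $\varepsilon=0$).} Apply the quotient rule and pass the derivative inside both expectations. This gives
\begin{equation*}
\left.\frac{\diff}{\diff\varepsilon}\mathbb{E}_{q_{m,t}^{(\varepsilon)}}[h]\right|_{\varepsilon=0}=\mathbb{E}_{q_{m,t}}[h\, b^\top g]-\mathbb{E}_{q_{m,t}}[h]\,\mathbb{E}_{q_{m,t}}[b^\top g]=\operatorname{Cov}_{q_{m,t}}(h,\,b^\top g).
\end{equation*}
The boundary condition $\int_\Theta \nabla_\theta q_{m,t}\,\diff\theta=0$ gives $\mathbb{E}_{q_{m,t}}[g]=0$, the same fact used for the preserved target in Theorem~\ref{thm:variance_reduction}. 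The second term therefore vanishes, although it is not needed for the covariance form. The right-hand side is finite by Cauchy--Schwarz, using the second-moment assumptions of Theorem~\ref{thm:variance_reduction}.

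\textbf{Step 3 (orthogonality).} Expand the covariance using bilinearity and the definition of $\gamma_m(\lambda)$:
\begin{equation*}
\operatorname{Cov}(h,b^\top g)=b^\top\operatorname{Cov}(g,\mathrm{EI}_m)-b^\top\Sigma_g\,\gamma_m(\lambda)=b^\top\operatorname{Cov}(g,\mathrm{EI}_m)-b^\top\Sigma_g\Sigma_g^{-1}\operatorname{Cov}(g,\mathrm{EI}_m)=0.
\end{equation*}
This uses only the symmetry and nonsingularity of $\Sigma_g$. It holds for every $b$, which yields the claim.

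\textbf{Main obstacle.} The delicate step is justifying the interchange of derivative and expectation in Step 2. The tilted family must first be well defined, which requires $Z(\varepsilon)<\infty$ for small $|\varepsilon|$. Second moments of $g$ do not imply this, so the assumption must come from the ``standard score-function regularity conditions'': we would assume an exponential moment $\mathbb{E}_{q_{m,t}}[\exp(\delta|b^\top g|)]<\infty$ for some $\delta>0$. Given it, the difference quotients are dominated via $|e^{\varepsilon x}-1|/|\varepsilon|\le |x|e^{\delta|x|}$ for $|\varepsilon|\le\delta$. Hölder's inequality then combines $h\in L^2$ with the exponential moment, after shrinking $\delta$, to give an integrable dominating function, and dominated convergence licenses differentiation under the integral for both numerator and $Z$. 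If one prefers to avoid exponential moments, an alternative is to interpret the statement as a pathwise (Gateaux) derivative along the score direction $b^\top g$. In that case Step 2 holds by definition as the covariance, and the substantive content is exactly Step 3.
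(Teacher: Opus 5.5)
Your proposal is correct and follows the paper's proof essentially step for step: differentiate the normalized tilted expectation to obtain $\operatorname{Cov}_{q_{m,t}}(\mathrm{EI}_m^{\mathrm{orth}}, b^\top g)$, then use $\gamma_m(\lambda)=\Sigma_g^{-1}\operatorname{Cov}(g,\mathrm{EI}_m)$ to show this covariance vanishes for every $b$. Your handling of the derivative--integral interchange is more careful than the paper's, which appeals only to square integrability; you are right that a local exponential moment of $b^\top g$ (or another condition guaranteeing $Z(\varepsilon)<\infty$ near $0$) is needed for the tilted family to exist and for dominated convergence to apply.
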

\vspace{-0.4cm}
\begin{proof}
We provide a proof in Supplement~\ref{sec:appendix_proofs}.
\end{proof}

\textbf{Implications for acquisition estimation.} Theorem~\ref{thm:variance_reduction}(i) shows that orthogonalization preserves the BO target. Since the MC estimator is an i.i.d.\ sample mean, $\operatorname{Var}(\widehat{\mathrm{EI}}_m^{\mathrm{orth}}(\lambda)) = \operatorname{Var}(\mathrm{EI}_m^{\mathrm{orth}}(\lambda;\theta))/S$, Theorem~\ref{thm:variance_reduction}(ii) directly implies $\operatorname{Var}(\widehat{\mathrm{EI}}_m^{\mathrm{orth}}(\lambda)) \le \operatorname{Var}(\widehat{\mathrm{EI}}_m^{\mathrm{MC}}(\lambda))$ for any MC budget $S$, which improves the stability of acquisition-based decisions. Corollary~\ref{cor:robustness} provides local robustness in the orthogonal ML sense \citep{Chernozhukov.2018, Kennedy.2024}: small score-tilt perturbations of $q_{m,t}$ do not affect the orthogonalized acquisition to first order. Importantly, Theorem~\ref{thm:variance_reduction} is a \emph{within-model} result: orthogonalization stabilizes acquisition estimation conditional on $q_{m,t}$, but does not directly correct structural misspecification of the surrogate family. We address the latter through ensemble modeling in Section~\ref{sec:optimization_procedure}.

\begin{proposition}[Pairwise ranking stability]
\label{prop:ranking}
Let $\lambda, \lambda' \in \Lambda$ such that $\Delta(\lambda,\lambda') := \mathrm{EI}_m^{\mathrm{marg}}(\lambda) - \mathrm{EI}_m^{\mathrm{marg}}(\lambda') > 0$. Let $\widehat{\Delta}_{\mathrm{MC}}(\lambda,\lambda')$ and $\widehat{\Delta}_{\mathrm{orth}}(\lambda,\lambda')$ denote the corresponding Monte Carlo difference estimators. 
Then both estimators are unbiased for $\Delta(\lambda,\lambda')$, and 
\footnotesize
\begin{equation}
\mathbb{P}\!\left(\widehat{\Delta}_{\mathrm{orth}}(\lambda,\lambda') \le 0\right)
    \;\leq\; \frac{\operatorname{Var}\!\left(\widehat{\Delta}_{\mathrm{orth}}(\lambda,\lambda')\right)}{\operatorname{Var}\!\left(\widehat{\Delta}_{\mathrm{orth}}(\lambda,\lambda')\right) + \Delta(\lambda,\lambda')^2}
    \;\leq\; \frac{\operatorname{Var}\!\left(\widehat{\Delta}_{\mathrm{MC}}(\lambda,\lambda')\right)}{\operatorname{Var}\!\left(\widehat{\Delta}_{\mathrm{MC}}(\lambda,\lambda')\right) + \Delta(\lambda,\lambda')^2}.
\end{equation}
\end{proposition}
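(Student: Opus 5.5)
The plan is to combine the unbiasedness from Theorem~\ref{thm:variance_reduction}(i) with a one-sided Chebyshev (Cantelli) inequality, and then to use a monotonicity argument for the variance comparison. First, I fix how the difference estimators are built. I take $\widehat{\Delta}_{\mathrm{MC}}(\lambda,\lambda') = \widehat{\mathrm{EI}}_m^{\mathrm{MC}}(\lambda) - \widehat{\mathrm{EI}}_m^{\mathrm{MC}}(\lambda')$ and $\widehat{\Delta}_{\mathrm{orth}}(\lambda,\lambda') = \widehat{\mathrm{EI}}_m^{\mathrm{orth}}(\lambda) - \widehat{\mathrm{EI}}_m^{\mathrm{orth}}(\lambda')$, both computed from common samples $\theta_m^{(s)} \sim q_{m,t}$. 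Each is then an i.i.d.\ sample mean of the per-sample differences $D(\theta) := \mathrm{EI}_m(\lambda;\theta) - \mathrm{EI}_m(\lambda';\theta)$ and $D^{\mathrm{orth}}(\theta) := D(\theta) - \gamma_\Delta^\top g(\theta)$, where $\gamma_\Delta := \gamma_m(\lambda) - \gamma_m(\lambda')$. Linearity of expectation and Theorem~\ref{thm:variance_reduction}(i), equivalently $\mathbb{E}_{q_{m,t}}[g]=0$ from the boundary condition, give unbiasedness of both estimators for $\Delta(\lambda,\lambda')$.

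Second, I apply Cantelli's inequality. For any square-integrable $X$ with mean $\mu>0$ and variance $\sigma^2$, we have $\mathbb{P}(X \le 0) = \mathbb{P}(X-\mu \le -\mu) \le \sigma^2/(\sigma^2+\mu^2)$. Applying this to $X=\widehat{\Delta}_{\mathrm{orth}}$ with $\mu = \Delta>0$ gives the first inequality. The second-moment assumptions of Theorem~\ref{thm:variance_reduction} guarantee the variances are finite.

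Third, I compare the variances. The map $v \mapsto v/(v+\Delta^2)$ is nondecreasing on $[0,\infty)$, so it suffices to show $\operatorname{Var}(\widehat{\Delta}_{\mathrm{orth}}) \le \operatorname{Var}(\widehat{\Delta}_{\mathrm{MC}})$. Dividing by $S$, this reduces to $\operatorname{Var}(D^{\mathrm{orth}}) \le \operatorname{Var}(D)$. Here is the key observation. The coefficient map $\lambda \mapsto \gamma_m(\lambda) = \Sigma_g^{-1}\operatorname{Cov}(g,\mathrm{EI}_m(\lambda;\cdot))$ is linear in the integrand, so $\gamma_\Delta = \Sigma_g^{-1}\operatorname{Cov}(g, D)$. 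That is exactly the optimal control-variate coefficient for $D$ itself. Repeating the computation of Theorem~\ref{thm:variance_reduction}(ii) with $D$ in place of $\mathrm{EI}_m$ gives
\[
\operatorname{Var}(D^{\mathrm{orth}}) = \operatorname{Var}(D) - \operatorname{Cov}(g,D)^\top\Sigma_g^{-1}\operatorname{Cov}(g,D) \le \operatorname{Var}(D),
\]
since $\Sigma_g^{-1}$ is positive definite.

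The main obstacle I expect is exactly this step. Applying Theorem~\ref{thm:variance_reduction}(ii) separately at $\lambda$ and at $\lambda'$ does not control the variance of the difference, because cross-covariances enter. The argument therefore has to rest on linearity of $\gamma_m$ and the use of common random numbers. If the estimators were instead built from independent samples at $\lambda$ and $\lambda'$, the variances would add, and the result would follow directly from Theorem~\ref{thm:variance_reduction}(ii) applied pointwise. I would note both cases. I would also remark that the statement concerns the oracle $\gamma_m$; with the plug-in $\hat\gamma_m$ it holds only up to the $O(1/S)$ terms mentioned earlier.
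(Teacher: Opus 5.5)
Your proposal is correct and follows the paper's proof: unbiasedness from $\mathbb{E}_{q_{m,t}}[g]=0$, Cantelli's inequality applied to $\widehat{\Delta}_{\mathrm{orth}}-\Delta$ with $a=\Delta$, and monotonicity of $v\mapsto v/(v+\Delta^2)$ combined with a variance comparison. The only place you go beyond the paper is that variance comparison. The paper attributes it directly to Theorem~\ref{thm:variance_reduction}(ii), whereas you note that under common samples $\gamma_m(\lambda)-\gamma_m(\lambda')=\Sigma_g^{-1}\operatorname{Cov}(g,D)$; this is exactly what licenses applying that theorem to the difference functional $D$ (as the appendix GP/TPE versions phrase it), so it makes an implicit step explicit rather than changing the route.
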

\normalsize
\vspace{-0.5cm}
\begin{proof}
We provide a proof for Proposition~\ref{prop:ranking} in Supplement~\ref{sec:appendix_proofs}.
\end{proof}
\vspace{-0.2cm}

Proposition~\ref{prop:ranking} shows how acquisition estimation noise propagates into BO decisions. High estimator variance can flip the estimated sign of the acquisition gap between two candidates points, thus causing BO to select a suboptimal point. Orthogonalization reduces the variance without changing the acquisition target, and therefore directly reduces the probability of such ranking errors.

\subsection{Instantiations for common surrogates}
\label{sec:surrogate_instantiations}

We instantiate \framework for two surrogate classes common in HPO: \circled{A} Gaussian process and \circled{B} tree-based density surrogates. The first corresponds to the standard BO setting, while the second captures widely used non-Gaussian and nonparametric approaches such as TPE-style optimization.

\vspace{-0.2cm}
\subsubsection*{\bigcircle{A} Gaussian process (GP) surrogate}
\label{sec:gp_instantiation}
\vspace{-0.2cm}
For GP surrogate models \cite{Ha.2023, Jones.1998, Snoek.2012}, acquisition estimation noise is often driven by sensitivity of the mean $\mu(\cdot)$ and variance $\sigma(\cdot)$ function to lengthscale and noise hyperparameters. We show that orthogonalization reduces MC fluctuations induced by uncertainty in these directions. This is especially useful early in optimization when the approximation of the posterior over the GP hyperparameters is diffuse.
Let
\begin{equation}
f \sim \mathcal{GP}\!\left(m_\theta(\cdot), k_\theta(\cdot,\cdot)\right),
\end{equation}
where here $m_\theta$ denotes the mean function (typically taken constant or zero) and $k_\theta$ the covariance kernel with $\theta \in \Theta \subset \mathbb{R}^{d_\theta}$ collecting the surrogate hyperparameters, such as kernel lengthscales, amplitude, and noise variance. Given data $\mathcal{D}_t = \{(\lambda_i,y_i)\}_{i=1}^t$, the predictive distribution at a candidate $\lambda$ is
\begin{equation}
f(\lambda)\mid \mathcal{D}_t,\theta
\sim
\mathcal{N}\!\left(\mu_t(\lambda;\theta), \sigma_t^2(\lambda;\theta)\right),
\end{equation} 
with
\begin{align}
\mu_t(\lambda;\theta)
&= m_\theta(\lambda) + k_\theta(\lambda,\Lambda_t)^\top
\left(K_\theta(\Lambda_t,\Lambda_t)+\sigma_n^2 I_t\right)^{-1}
\left(y_t-m_\theta(\Lambda_t)\right), \\
\sigma_t^2(\lambda;\theta)
&= k_\theta(\lambda,\lambda) - k_\theta(\lambda,\Lambda_t)^\top
\left(K_\theta(\Lambda_t,\Lambda_t)+\sigma_n^2 I_t\right)^{-1}
k_\theta(\lambda,\Lambda_t),
\end{align}
where $\Lambda_t=(\lambda_1,\dots,\lambda_t)$ and $y_t=(y_1,\dots,y_t)^\top$, and $K$ denotes the Gram matrix.
We aim to maximize the EI
\begin{equation}
\mathrm{EI}^{\mathrm{GP}}(\lambda;\theta)
= (f^\star-\mu_t(\lambda;\theta))\,\Phi(z_t(\lambda;\theta)) +
\sigma_t(\lambda;\theta)\,\phi(z_t(\lambda;\theta)),
\end{equation}
where
\begin{equation}
z_t(\lambda;\theta) = \frac{f^\star-\mu_t(\lambda;\theta)}{\sigma_t(\lambda;\theta)},
\qquad f^\star = \min_{i \le t} y_i,
\end{equation}
and $\Phi$ and $\phi$ denote the standard normal cumulative distribution and probability density function.

We assume an approximate posterior $q_t(\theta)$ over GP hyperparameters, obtained for example via a Laplace approximation or variational Gaussian approximation. The marginal acquisition is then \vspace{-0.1cm}
\begin{equation}
\mathrm{EI}^{\mathrm{GP,marg}}(\lambda) = \mathbb{E}_{\theta \sim q_t}\!\left[\mathrm{EI}^{\mathrm{GP}}(\lambda;\theta)\right],
\end{equation}
and the orthogonalized estimator is
\begin{mybox}
\vspace{-0.3cm}
\begin{equation}
\widehat{\mathrm{EI}}^{\mathrm{GP,orth}}(\lambda) = \frac{1}{S}\sum_{s=1}^S \left[ \mathrm{EI}^{\mathrm{GP}}(\lambda;\theta^{(s)}) - \gamma^{\mathrm{GP}}(\lambda)^\top g_t(\theta^{(s)})
\right], \qquad \theta^{(s)} \sim q_t,
\end{equation}
\vspace{-0.38cm}
\end{mybox}
where $g_t(\theta) = \nabla_\theta \log q_t(\theta)$ and $\gamma^{\mathrm{GP}}(\lambda) = \operatorname{Cov} (g_t,g_t)^{-1}\operatorname{Cov}\!\left(g_t,\mathrm{EI}^{\mathrm{GP}}(\lambda;\theta)\right)$.

\subsubsection*{\bigcircle{B} Tree-based / TPE-style surrogate}
\label{sec:tpe_instantiation}
\vspace{-0.2cm}
As a second instantiation, we consider tree-based or density-based surrogates as in Tree-structured Parzen Estimators (TPE) \cite{Bergstra.2011, Bergstra.2013, Watanabe.2023}. Rather than modeling $p(y \mid \lambda)$ directly, TPE-style methods model \vspace{-0.1cm}
\begin{equation}
p(\lambda \mid y)
= \begin{cases}
\ell_\theta(\lambda), & y < y^\star, \\
g_\theta(\lambda), & y \ge y^\star,
\end{cases}
\end{equation}
where $y^\star$ is a quantile threshold and $\theta$ denotes model parameters, such as kernel density bandwidths, mixture weights, or tree parameters. In this setting, the acquisition is proportional to a density ratio: \vspace{-0.1cm}
\begin{equation}
\mathrm{EI}^{\mathrm{TPE}}(\lambda;\theta)
\propto
\frac{\ell_\theta(\lambda)}{g_\theta(\lambda)}.
\end{equation}
More generally, we denote the EI as $h^{\mathrm{TPE}}(\lambda;\theta) := \mathrm{EI}^{\mathrm{TPE}}(\lambda;\theta)$, where $h^{\mathrm{TPE}}$ represents either the exact acquisition induced by the density model or a practical approximation.

Unlike the GP case, differentiability of $q_t(\theta)$ may not be possible. We therefore consider a more general control variate construction. Let $c_t(\theta) \in \mathbb{R}^{d_c}$ denote any random vector satisfying $\mathbb{E}_{q_t}[c_t(\theta)] = 0$, such as centered bootstrap statistics, centered density ratio proxies, centered bandwidth or split summaries, or score functions when they exist. We then obtain our orthogonal acquisition estimator by
\vspace{-0.58cm}

\begin{mybox}
\vspace{-0.55cm}   
\begin{equation}
\widehat{\mathrm{EI}}^{\mathrm{TPE,orth}}(\lambda) = \frac{1}{S}\sum_{s=1}^S \left[ h^{\mathrm{TPE}}(\lambda;\theta^{(s)}) - \gamma^{\mathrm{TPE}}(\lambda)^\top c_t(\theta^{(s)}) \right],
\qquad \theta^{(s)} \sim q_t,
\end{equation}
\vspace{-0.55cm}
\end{mybox}

with \vspace{-0.2cm}
\begin{equation}
\gamma^{\mathrm{TPE}}(\lambda) = \operatorname{Cov}(c_t,c_t)^{-1}
\operatorname{Cov}\!\left(c_t,h^{\mathrm{TPE}}(\lambda;\theta)\right).
\end{equation}

\subsection{\framework}
\label{sec:optimization_procedure}

We present the full \framework procedure in Alg.~\ref{algorithm}. Therein, we combine our orthogonalized acquisition estimator with ensemble surrogates and an outer log transformation.

\textbf{Ensemble surrogates.}
To mitigate structural surrogate misspecification, we consider an ensemble of surrogate models $m = 1,\dots,M$. Such ensembles can consist of Gaussian processes with different kernels, models with varying likelihood assumptions, various tree-based or neural surrogates. For each model $m$, we obtain the respective orthogonalized acquisition estimate $\widehat{\mathrm{EI}}_m^{\mathrm{orth}}(\lambda)$. We then ensemble the acquisition as a weighted average of weights $\pi_{m,t}$
\vspace{-0.2cm}
\begin{equation} \label{eq:ensemble}
\widehat{\mathrm{EI}}^{\mathrm{ens}}(\lambda) = \sum_{m=1}^M \pi_{m,t}
\widehat{\mathrm{EI}}_m^{\mathrm{orth}}(\lambda).
\end{equation}

\textbf{Exponentially weighted aggregation.}
We update ensemble weights through tempered exponentially weighted aggregation \cite{Freund.1997}, a widely used approach in model aggregation \cite{Fedus.2022,Jacobs.1991,Shazeer.2017}. A minimum weight floor \cite{Herbster.1998} prevents premature collapse onto a single surrogate
\begin{equation} \label{eq:weights}
\pi_{m,t} \propto \max(\delta,\, \pi_{m,t-1} \exp(\ell_{m,t}/\tau)),
\end{equation}
where $\ell_{m,t}$ is a predictive log score obtained from $m_t$ using $(\lambda_{t+1}, y_{t+1})$, $\tau > 0$ a temperature parameter, and $\delta > 0$ enforces a minimum weight. This results in a robust aggregation rule that balances exploitation of well-performing surrogates with persistent exploration across model classes. Note, that our overall methodology is independent and can be combined with other update strategies.  \vspace{0.2cm}
\begin{wrapfigure}[22]{r}{0.5\textwidth} 
\vspace{-.5cm}
\scriptsize
    \begin{algorithm}[H] 
    \caption{\footnotesize \framework}
    \label{algorithm}
    \KwIn{Initial data size $n_0$, budget $T$, number of surrogate models $M$, candidate set generator $\mathcal{C}_t$, numerical floor $\alpha > 0$, objective function $f$}
    Initialize $\mathcal{D}_0=\{(\lambda_i,y_i)\}_{i=1}^{n_0}$ using random  hyperparameter configurations drawn from a Sobol distribution\\
    \For{$t=0,1,\dots,T-1$}{
        \For{$m=1,\dots,M$}{
        Fit surrogate model $m$ on $\mathcal{D}_t$\;
        Obtain approximate posterior $q_{m,t}(\theta)$\;
        \vspace{-0.1cm}
    }
    Update ensemble weights $\pi_{m,t}$ for $m=1,\dots,M$\;
    \ForEach{$\lambda \in \mathcal{C}_t$}{
        \For{$m=1,\dots,M$}{
            \textbf{Orthogonalization:} 
            Compute orthogonalized marginal EI estimate $\widehat{\mathrm{EI}}_m^{\mathrm{orth}}(\lambda)$  via Eq.~\eqref{eq:ortho_EI}\;
            \vspace{-0.1cm}
        }
        \textbf{Ensemble strategy:}\vspace{-0.3cm} 
        \begin{align*} 
        \widehat{\mathrm{EI}}^{\mathrm{ens}}(\lambda)
        = \sum_{m=1}^M \pi_{m,t}\,\widehat{\mathrm{EI}}_m^{\mathrm{orth}}(\lambda)
        \end{align*} 
        \vspace{-0.4cm}
        }
        \textbf{Outer log transformation:} \vspace{-0.2cm}
        \begin{align*}
        \lambda_{t+1}
        = \arg\max_{\lambda \in \mathcal{C}_t}
        \log\!\big(\max(\widehat{\mathrm{EI}}^{\mathrm{ens}}(\lambda), \alpha)\big)        
        \end{align*}
        Evaluate $y_{t+1}=f(\lambda_{t+1})$;
        update $\mathcal{D}_{t+1}=\mathcal{D}_t \cup \{(\lambda_{t+1},y_{t+1})\}$\;
   \vspace{-0.1cm} 
}
\KwOut{Best observed configuration in $\mathcal{D}_T$}
\end{algorithm}
\normalsize
\end{wrapfigure}
\textbf{Optimization based on outer log stabilization.}
To improve numerical stability, we follow recent literature \cite{Ament.2023} and optimize the \emph{outer log transformed} acquisition
\begin{align}
    \widehat L^{\mathrm{ens}}(\lambda_t)=\log\!\big(\max(\widehat{\mathrm{EI}}^{\mathrm{ens}}(\lambda_t),\alpha)\big),
\end{align}
where $\alpha>0$ ensures that the logarithm is well defined. We select the next hyperparameter configuration $\lambda_{t+1} = \arg\max_{\lambda \in \mathcal{C}_t} \widehat L^{\mathrm{ens}}(\lambda)$, and evaluate it on the true objective to obtain $y_{t+1}$.

\vspace{-0.1cm}
\section{Empirical results}\label{sec:empirical_results}
\vspace{-0.1cm}

\textbf{Experimental setup.} $\bullet$\,\emph{Baselines:}
We evaluate \framework against various baselines from the literature. Specifically, we compare \framework against: \textbf{qLogEI} \citep{Ament.2023} (main comparison method), \textbf{UCB} \citep{wilson2017reparameterization}, \textbf{TuRBO} \citep{Eriksson.2019}, and \textbf{Sobol} sampling (e.g., \citep[]{Ament.2024}). Our experimental settings follow prior literature (e.g., \citep{Ament.2023,Ament.2024}): unless otherwise noted, all experiments use 16 replications, $S=512$, and we start all methods from the same $n_0=32$ Sobol points. For optimizing the acquisition functions, we use 512 raw samples and allow 8 restarts to prevent getting stuck in local minima \citep{torn1989global}. To isolate the effect of orthogonalization, we use $M=1$ throughout the experiments in the main paper. $\bullet$\,\emph{Datasets:} We consider four standard benchmarking functions with known global optima (to compute the regret) and varying dimensionality: \textbf{Hartmann6}, \textbf{Ackley8}, \textbf{Michalewicz10}, and \textbf{Levy16} (e.g., \citep[][]{Ament.2023,Bodin.2020,eriksson2021scalable,moss2023inducing}). Details of the benchmark functions and surrogate choices are provided in Supplement~\ref{appendix:implementation_details}. $\bullet$\,\emph{Metrics:} We compare methods using best-so-far regret and report 95\% confidence intervals (CIs) from standard normal approximation.

\textbf{Evaluation.} 
Our goal is to first demonstrate the theoretical properties of \framework by isolating the effect of orthogonalization of the acquisition estimation. For this, we show \rqtag{rq1}{1} the \textbf{variance reduction} and \rqtag{rq2}{2} \textbf{ranking stability} properties of \framework. Then, we evaluate the resulting \rqtag{rq3}{3} \textbf{efficiency gain} in terms of MC sampling budget. Finally, we demonstrate the effectiveness for real-world training and fine-tuning tasks where we show \rqtag{rq4}{4} the improved \textbf{downstream utility}. We present further experiments on misspecified surrogate families, weakly fitted hyperparameters or surrogates, CIFAR10, and ensemble experiments in Supplement~\ref{sec:appendix_results}.
 
\begin{wraptable}{r}{0.55\linewidth}
    \centering
    \scriptsize
    \vspace{-0.5cm}
    \caption{\textbf{Variance reduction} through orthogonalization (with Sobol probes; $\times 10^{-8}$).}
    \label{table:variance_reduction}
    \vspace{-0.2cm}
\begin{tabular}{p{0.1cm}p{1.25cm}p{1.25cm}p{1.25cm}p{1.22cm}}
\toprule
&   \multicolumn{2}{c}{Michalewicz10} & \multicolumn{2}{c}{Levy16}\\
\cmidrule(lr){2-3} \cmidrule(lr){4-5}
 $S$ & $\operatorname{Var}(\widehat{\mathrm{EI}}_m^{\mathrm{MC}})$ & $\operatorname{Var}(\widehat{\mathrm{EI}}_m^{\mathrm{ortho}})$ & $\operatorname{Var}(\widehat{\mathrm{EI}}_m^{\mathrm{MC}})$ & $\operatorname{Var}(\widehat{\mathrm{EI}}_m^{\mathrm{ortho}})$ \\
\midrule
8 & 9.50861 & 4.54151 & 105590.0 & 70935.8 \\
&& \tiny(-52.24\%) && \tiny(-32.82\%) \\
\midrule
32 & 1.66805 & 0.10681 & 20765.8 & 2266.7 \\
&& \tiny(-93.60\%) & & \tiny(-89.08\%) \\
\bottomrule
\end{tabular}
\vspace{-0.4cm}
\end{wraptable}
\textbf{\rqtag{rq1}{1} Variance reduction.}
We probe the acquisition function with random Sobol samples and compare the estimated variance of the raw and orthogonalized acquisition function. We use a GP surrogate. We report results for a Mat\'ern-$5/2$ kernel with ARD on Michalewicz10 and Levy16 in Table~\ref{table:variance_reduction}; results for an RBF kernel and a TPE surrogate are in Supplement~\ref{sec:appendix_var_stability_results}. $\Rightarrow$ \emph{Across all settings, \textbf{orthogonalization reduces the variance}, which confirms that \framework substantially stabilizes the acquisition estimate.}

\textbf{\rqtag{rq2}{2} Ranking stability.} 
We repeatedly evaluate the acquisition function on the same fixed Sobol probe set and compare the rankings across runs. We report: (i)~\emph{Probe variance}: mean variance of the acquisition value over the probe points across repeated acquisition evaluations; (ii)~\emph{Top1 agreement}: fraction of repeats that select the same top-ranked probe point; (iii)~the \emph{Flip rate} of adjacent high-ranked candidate pairs, and (iv)~\emph{Regret}. The flip rate mirrors the idea behind Proposition~\ref{prop:ranking}, in that it measures how often local pairwise orderings change due to noise in the acquisition-function estimation. The main results are in Table~\ref{table:ranking_stability}; further results are in Supplement~\ref{sec:appendix_var_stability_results}. $\Rightarrow$ \emph{Our variance reduction yields more stable rankings.} \hspace{8cm} \vspace{0.2cm}
\begin{wraptable}{r}{0.7\linewidth}
\vspace{-0.4cm}
    \centering
    \scriptsize
    \caption{\textbf{Ranking stability} (mean$\pm _{\mathrm{std}}$), via Sobol probes. Lower variance and flip rate indicate more stable acquisition estimates; higher top-1 agreement indicates more stable candidate rankings across probes.}
    \label{table:ranking_stability}
    \vspace{-0.2cm}
    \adjustbox{width=\linewidth}{
    \begin{tabular}{lrrrrrr}
\toprule
 & Method  & Probe &  Top1 & Flip & Regret $\downarrow$ \\
 && variance $\downarrow$ &  agreement $\uparrow$ & rate $\downarrow$ \\
\midrule
\multirow{2}{*}{Michalewicz10} & qLogEI & $168.603_{154.76}$ & $0.925_{0.16}$ & $0.148_{0.12}$ & $7.140_{0.01}$ \\
 & OrthoBO (ours) & $0.019_{0.03}$ & $0.988_{0.06}$ & $0.014_{0.03}$ & $6.961_{0.36}$ \\
\midrule
\multirow{2}{*}{Levy16} & qLogEI & $233.655_{287.67}$ & $0.917_{0.12}$ & $0.069_{0.09}$ &  $82.194_{6.05}$ \\
 & OrthoBO (ours) & $0.073_{0.21}$ & $0.988_{0.06}$ & $0.048_{0.04}$ &  $58.938_{0.00}$\\
\bottomrule
\end{tabular}}
\vspace{-0.3cm}
\end{wraptable}
\textbf{\rqtag{rq3}{3} MC efficiency.}
We use an isotropic RBF kernel (as a common smooth GP prior) combined with a moderate surrogate mismatch and reduced budges for the acquisition. We show the main results for varying $S$ on Ackley8 in Fig.~\ref{fig:mc_ablation}; further results are in Fig.~\ref{fig:mc_ablation_full} in Supplement~\ref{sec:appendix_results}. Reducing the MC budget makes the performance of acquisition-based methods more sensitive to estimation noise. Across all benchmarks, our \framework remains competitive and frequently achieves the lowest regret. $\Rightarrow$ \emph{Orthogonalization is especially effective in earlier iterations, where MC estimation noise has a larger effect on the acquisition values.}

\begin{figure}[h]
    \centering
    \includegraphics[width=1\textwidth]{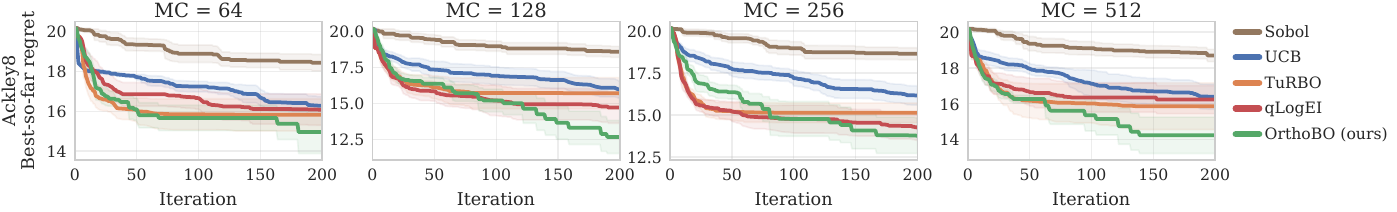}
    \vspace{-0.4cm}
    \caption{\textbf{MC efficiency.} Best-so-far regret for different MC budgets $S$ $\Rightarrow$ \emph{\framework achieves the lowest regrets across all budgets}.}
    \label{fig:mc_ablation}
\end{figure}
\textbf{\rqtag{rq4}{4} Downstream utility.}
We show how \framework improves hyperparameter optimization on two real-world tasks: (i)~training a neural network and (ii)~optimizing fine-tuning hyperparameters of a pre-trained vision transformer (ViT) \citep{dosovitskiy2020image} for manufacturing image classification.
\begin{wrapfigure}[12]{r}{0.45\linewidth}
    \vspace{-.2cm}
    \centering
    \includegraphics[width=\linewidth]{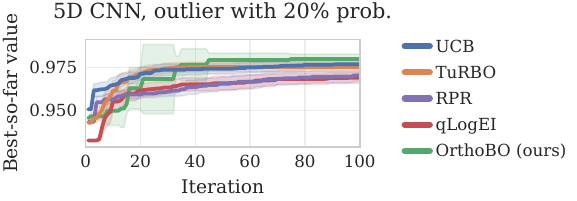}
    \vspace{-0.5cm}
    \caption{\textbf{Training (5D CNN).} Outliers (20\% prob.) are injected to make MNIST training more challenging. $\Rightarrow$ \emph{\framework achieves the \textbf{strongest final performance}.}}
    \label{fig:cnn_outlier}
\end{wrapfigure}

\vspace{-0.2cm}
\emph{(i) Training (neural network).} We employ the 5D CNN benchmark by \citet{Ament.2024} and include their RPR method as comparison. Details are in Supplement~\ref{sec:appendix_cnn_details}. To make the training more challenging, we introduce outliers with 20\% probability at different iterations, which can distort the surrogate fit, especially early in BO when only few observations are available. Results on MNIST \citep{lecun2010mnist} are in Fig.~\ref{fig:cnn_outlier}; CIFAR10 \citep{Krizhevsky09learningmultiple} results are in Fig.~\ref{fig:appendix_cifar10}. $\Rightarrow$~\emph{Here, \framework improves steadily over the optimization horizon and achieves the best value.}

\begin{wrapfigure}[10]{r}{0.45\linewidth}
    \centering
    \vspace{-0.6cm}
    \includegraphics[width=\linewidth]{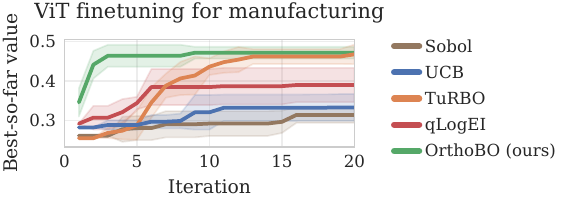}
    \vspace{-0.5cm}
    \caption{\textbf{Fine-tuning (ViT).} $\Rightarrow$ \emph{\framework \textbf{improves faster than the baselines} and achieves the highest best-so-far validation score.}}
    \label{fig:use_case}
\end{wrapfigure}
\emph{(ii) Fine-tuning (vision transformer).}
We optimize five fine-tuning hyperparameters on the industrial WM811K wafer-map dataset \citep{wafermap,wu2015wafer} for a vision transformer (ViT) based on the F1 score. Details are provided in Appendix~\ref{sec:appendix_case_study_details}. We present results in Fig.~\ref{fig:use_case}. \framework improves rapidly within the first few evaluations and maintains the highest best-so-far validation F1 score throughout most of the optimization horizon. $\Rightarrow$ \emph{Compared with qLogEI, \framework improves the best observed validation F1 score \textbf{by up to 20 percentage points}. This demonstrates that the \framework also improves performance in a practical, high-cost HPO setting beyond synthetic benchmarks.}

\vspace{-0.2cm}
\section{Discussion}\label{sec:discussion}
\vspace{-0.2cm}

$\bullet$ \emph{Conclusion.} We introduce \framework: a novel orthogonalized framework for reducing variance when estimating the acquisition function. As a result, our work identifies a failure mode of BO that was previously overlooked: BO decisions depend on the ranking of estimated acquisition values, so MC noise in these estimates can directly induce ranking errors and lead to suboptimal evaluations. \framework addresses this failure mode by reducing variance while preserving the underlying marginal EI target. Our results show that this variance reduction improves ranking stability and translates into better downstream BO performance. 
$\bullet$ \emph{Limitations.} First, \framework introduces additional computational overhead; however, the overhead is typically small compared with expensive evaluations of the objective function in HPO (see Supplement~\ref{sec:appendix_theory}). Second, we focus only on EI-based acquisition functions. Note that the same variance-reduction principle can be extended to other acquisition functions but may require acquisition-specific orthogonalization schemes (see Supplement~\ref{sec:appendix_general_extension}).
$\bullet$ \emph{Implications.} Our framework is widely applicable not only to HPO but essentially all BO problems, including in marketing and medicine.

\newpage
\bibliography{bibliography}

\newpage
\appendix

\section{Extended related work}
\label{sec:appendix_related_work}

Below, we discuss more distant related literature, completing the discussion in Section~\ref{sec:related_work}. We first cover literature regarding the robustness of the three components of BO in our work, and finally present related work on orthogonal machine learning.

\textbf{Marginalization over surrogate uncertainty.}
A standard source of instability in BO is uncertainty in the surrogate itself, especially in the hyperparameters such as kernel lengthscales. Prior work has emphasized that a fully Bayesian treatment of these hyperparameters can substantially improve robustness relative to empirical Bayes or plug-in optimization \cite{Snoek.2012}. Related work has also incorporated hyperparameter uncertainty directly into acquisition design \cite{Hernandez.2014, Hernandez.2015}. While these approaches account for surrogate uncertainty by integrating over posterior uncertainty in the model, they do not directly address the additional MC instability that arises when the resulting acquisition value must itself be estimated.

\textbf{Ensemble surrogates and model aggregation.}
A natural strategy for mitigating surrogate misspecification is to replace a single surrogate with an ensemble of surrogate models. Recent work has proposed GP ensembles and mixture-based surrogate models for BO, showing that combining multiple kernels or surrogate classes can improve robustness relative to committing to a single GP prior \cite{Lu.2023, Polyzos.2023}. These approaches address only structural misspecification at the model level and do not account for instability in the individual surrogates. 

\textbf{Acquisition functions and numerical stability.}
Acquisition functions are the core decision rule in BO, with classical choices including Probability of Improvement, Expected Improvement, and upper-confidence-based methods \cite{Jones.1998,Srinivas.2010}. Recent work has highlighted that vanilla EI and its variants can be numerically unstable, which can substantially degrade optimization performance \cite{Ament.2023}. This has motivated numerically stable reformulations based on the logarithmic scale, which are substantially easier to optimize in practice. 

\textbf{Orthogonal machine learning.}
Orthogonal machine learning is rooted in semi-parametric statistics \cite{Bickel.1993}, and is applied in many settings where the estimation target is defined based on unknown nuisance components \cite{Chernozhukov.2018, Foster.2023}. Very common examples are found in the missing data analysis \cite{Robins.1994} and causal inference literature \cite{Kennedy.2024, Nie.2021, Oprescu.2019, Robins.2000, vanderLaan.2006}. Recently, orthogonal learning has also been expanded to other fields such as private machine learning \cite{Schroder.2025}, survival analysis \cite{Frauen.2025}, LLM evaluation \cite{Frauen.2026}, and general ML optimization topics such as neural operators or generative models \cite{Hess.2026, Melnychuk.2025}.

\newpage
\section{Failure modes in Bayesian Optimization}\label{sec:appendix_bo_failure_modes}

\textbf{Failure modes in BO.} Several failure modes arise in a realistic HPO setting (see, e.g., \cite[e.g.][]{Ament.2023, Berkenkamp.2019, Bogunovic.2021, Neiswanger.2021}). Bayesian HPO performance depends on the accuracy and stability of acquisition estimates, which are computed from a surrogate posterior that is often \emph{misspecified and approximately inferred}, leading to instability in acquisition values. Specifically, BO suffers from the following main failure modes: 

\begin{enumerate}[wide, labelwidth=!, labelindent=-2pt]
\item[\textbf{(i)}] \emph{Structural surrogate misspecification} arises when the surrogate model class does not adequately represent the unknown objective. For example, the assumed kernel, likelihood, prior smoothness, or stationary structure may be incompatible with the true response surface. As a result, the posterior mean and posterior uncertainty estimates may be systematically biased. 
\item[\textbf{(ii)}] \emph{Numerical instability} can arise when acquisition values are very small, highly skewed, or nearly indistinguishable across candidates. Directly optimizing such quantities can lead to unstable gradients.
\item[\textbf{(iii)}] \emph{Posterior approximation and acquisition estimation noise.} Marginalizing over surrogate uncertainty mitigates plug-in overconfidence, but the marginal acquisition value must be approximated from a finite number of posterior samples. This introduces \emph{Monte Carlo variance} into the acquisition estimate, on top of any bias from the approximate posterior $q_{m,t}$ itself differing from the true posterior. Both effects can negatively affect the ranking of candidate points.
\end{enumerate}

\newpage
\section{Extension to constrained and parallel acquisitions}
\label{sec:appendix_constrained_parallel}

In Section~\ref{sec:method}, we develop an orthogonalized estimator for the standard Expected Improvement setting. The same construction extends to a broader class of acquisition functions, including constrained EI, parallel or batch EI, and their combinations. Below, we first provide a general extension to our standard EI setting in the main paper and then specialize it to constrained and parallel Bayesian optimization.

\subsection{General acquisition-level extension}
\label{sec:appendix_general_extension}

Let $z$ denote a generic acquisition input. Depending on the setting, we write $z$ to represent (i)~a single candidate point $\lambda$ in standard BO, (ii)~a single candidate point $\lambda$ in constrained BO, or (iii)~a batch $\Lambda = (\lambda_1,\dots,\lambda_q)$ in parallel BO.

For surrogate model $m$ with parameter $\theta_m$ and approximate posterior $q_{m,t}(\theta)$, let $a_m(z;\theta_m)$ denote a generic acquisition function computed under parameter value $\theta_m$. We define the marginal acquisition target
\begin{equation}
A_m(z) := \mathbb{E}_{\theta_m \sim q_{m,t}}\!\left[a_m(z;\theta_m)\right].
\label{eq:generic_marginal_acq}
\end{equation}

As in Section~\ref{sec:method}, let $g_m(\theta_m) := \nabla_{\theta_m}\log q_{m,t}(\theta_m)$ denote the score function. With $\theta_m^{(s)} \overset{\mathrm{i.i.d.}}{\sim} q_{m,t}$ the orthogonalized estimator is given by
\begin{equation}
\widehat{A}_m^{\mathrm{orth}}(z)
= \frac{1}{S}\sum_{s=1}^S
\left[ a_m(z;\theta_m^{(s)})
- \left(\operatorname{Cov}(g_m,g_m)^{-1}\operatorname{Cov}(g_m,a_m)\right)^\top g_m(\theta_m^{(s)})
\right].
\label{eq:generic_orth_acq}
\end{equation}

\begin{proposition}[Generic extension of the orthogonalized estimator]
\label{prop:generic_extension}
Fix an iteration $t$, a surrogate model $m$, and an acquisition input $z$. Assume that
\begin{equation}
\mathbb{E}_{q_{m,t}}\!\left[\|g_m(\theta_m)\|_2^2\right] < \infty,
\qquad
\mathbb{E}_{q_{m,t}}\!\left[a_m(\theta_m;z)^2\right] < \infty,
\end{equation}
and that $\operatorname{Cov}(g_m,g_m)$ is nonsingular. Then:
\begin{enumerate}
    \item $\widehat{A}_m^{\mathrm{orth}}(z)$ is unbiased for $A_m(z)$
    \item
    \begin{equation}
    \operatorname{Var}\!\left(\widehat{A}_m^{\mathrm{orth}}(z)\right)
    = \frac{1}{S} \left(\operatorname{Var}(h_m) -\operatorname{Cov}(h_m,g_m)^\top \operatorname{Cov}(g_m,g_m)^{-1} \operatorname{Cov}(g_m,h_m) \right),
    \end{equation}
    and therefore
    \begin{equation}
    \operatorname{Var}\!\left(\widehat{A}_m^{\mathrm{orth}}(z)\right)
    \le \operatorname{Var}\!\left(\widehat{A}_m^{\mathrm{MC}}(z)\right),
    \end{equation}
    where $\widehat{A}_m^{\mathrm{MC}}(z):=\frac{1}{S}\sum_{s=1}^S h_m(\theta_m^{(s)};z)$
    \item The residual $r_m(\theta_m;z)
    := a_m(\theta_m;z)-\left(\operatorname{Cov}(g_m,g_m)^{-1}\operatorname{Cov}(g_m,a_m)\right)^\top g_m(\theta_m)$ is orthogonal to the score directions.
\end{enumerate}
\end{proposition}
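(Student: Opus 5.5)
The plan is to rerun the argument of Theorem~\ref{thm:variance_reduction} with $\mathrm{EI}_m$ replaced by the generic integrand $a_m(z;\cdot)$. The notation $h_m$ in the statement should be read as $a_m(z;\cdot)$. We treat $\gamma_m(z) := \operatorname{Cov}(g_m,g_m)^{-1}\operatorname{Cov}(g_m,a_m)$ as a fixed (population) vector, exactly as in the main text. The plug-in version is then handled by the cross-fitting remark.

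\textbf{Step 1 (zero-mean score).} We first show $\mathbb{E}_{q_{m,t}}[g_m(\theta_m)] = 0$. The score-function regularity condition gives
\[
\mathbb{E}_{q_{m,t}}[g_m] = \int_\Theta \nabla_\theta q_{m,t}(\theta)\,\diff\theta = 0 .
\]
The integral is well defined because $\mathbb{E}\|g_m\|_2^2<\infty$ implies $\mathbb{E}\|g_m\|_2<\infty$.

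\textbf{Step 2 (unbiasedness, item 1).} By linearity, $\mathbb{E}[a_m(z;\theta_m^{(s)}) - \gamma_m(z)^\top g_m(\theta_m^{(s)})] = A_m(z) - \gamma_m(z)^\top 0 = A_m(z)$. Averaging over the $S$ i.i.d.\ draws preserves this, so $\mathbb{E}[\widehat{A}_m^{\mathrm{orth}}(z)] = A_m(z)$.

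\textbf{Step 3 (variance, item 2).} The summands are i.i.d., so $\operatorname{Var}(\widehat{A}_m^{\mathrm{orth}}) = \tfrac1S \operatorname{Var}(a_m - \gamma_m^\top g_m)$. All second moments exist by the assumptions and Cauchy--Schwarz. Write $c := \operatorname{Cov}(g_m,a_m)$ and $\Sigma := \operatorname{Cov}(g_m,g_m)$. Expanding gives
\[
\operatorname{Var}(a_m) - 2\gamma_m^\top c + \gamma_m^\top \Sigma \gamma_m .
\]
Substituting $\gamma_m = \Sigma^{-1}c$, and using that $\Sigma^{-1}$ is symmetric, reduces this to $\operatorname{Var}(a_m) - c^\top\Sigma^{-1}c$. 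Since $\Sigma$ is nonsingular and positive semidefinite, it is positive definite, so $c^\top \Sigma^{-1} c \ge 0$. Finally, $\operatorname{Var}(\widehat{A}_m^{\mathrm{MC}}) = \operatorname{Var}(a_m)/S$, which yields the claimed inequality.

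\textbf{Step 4 (orthogonality, item 3).} We compute
\[
\operatorname{Cov}(g_m, r_m) = c - \Sigma\gamma_m = c - c = 0 .
\]
Because $\mathbb{E}[g_m]=0$, this is the same as $\mathbb{E}[g_m r_m]=0$, i.e.\ $r_m$ is $L^2(q_{m,t})$-orthogonal to every score direction $b^\top g_m$. Equivalently, $\gamma_m^\top g_m$ is the $L^2$ projection of $a_m - A_m$ onto $\operatorname{span}\{g_{m,1},\dots,g_{m,d_\theta}\}$. This projection view also gives Step 3 as a Pythagorean identity.

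\textbf{Main obstacle.} Essentially the only nontrivial point is Step 1: justifying $\mathbb{E}[g_m]=0$, which requires the vanishing boundary term and integrability of the score. Everything after that is linear algebra. A secondary caveat is the empirical $\hat\gamma_m$. With $\hat\gamma_m$ plugged in, exact unbiasedness fails at order $O(1/S)$. To keep the statement exact, I would either state it for the population $\gamma_m$ or use cross-fitting, estimating $\hat\gamma_m$ on an independent fold so that it is independent of $g_m(\theta^{(s)})$. Step 2 then goes through conditionally on $\hat\gamma_m$.
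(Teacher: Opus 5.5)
Your proposal is correct and takes the same route as the paper. The paper reruns the EI proof: zero-mean score from the vanishing boundary term, the expansion of $\operatorname{Var}(h - a^\top g)$ evaluated at $a = \gamma$, and nonnegativity of the quadratic form. Your orthogonality computation $\operatorname{Cov}(g_m, r_m) = c - \Sigma\gamma_m = 0$ is the identity the paper uses in its proof of the local-robustness corollary, and reading $h_m$ as $a_m(z;\cdot)$ with a population $\gamma_m$ matches the paper's intent.
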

\begin{proof}
The proof is identical to that of the EI case in Supplement~\ref{sec:appendix_proofs} after replacing $\mathrm{EI}_m(\lambda;\theta_m)$ by the generic acquisition value $a_m(z;\theta_m)$.
\end{proof}

\paragraph{Remark.}
Proposition~\ref{prop:generic_extension} shows that the theoretical properties of our orthogonalized approach are acquisition-agnostic. We only require square integrability of the acquisition value under $q_{m,t}$ and the usual score-function regularity conditions.

\subsection{Constrained Expected Improvement}
\label{app:constrained_ei}

Consider constrained BO with one objective $f_1$ and constraint functions $f_2,\dots,f_M$, where, without loss of generality, the feasible set is defined by
\begin{equation}
f_i(\lambda)\le 0,
\qquad i=2,\dots,M.
\end{equation}
A standard constrained Expected Improvement (CEI) acquisition is
\begin{equation}
\mathrm{CEI}_m(\lambda;\theta_m)
:=
\mathbb{E}\!\left[
(f^\star - f_1(\lambda))_+
\prod_{i=2}^M \mathbf{1}\{f_i(\lambda)\le 0\}
\,\middle|\,
\mathcal{D}_t,\theta_m
\right],
\end{equation}
where $f^\star$ denotes the current best \emph{feasible} objective value (i.e., the minimum of $f_1$ over feasible observations).

The corresponding marginal target is $\mathrm{CEI}_m^{\mathrm{marg}}(\lambda)
:= \mathbb{E}_{\theta_m\sim q_{m,t}}
\!\left[\mathrm{CEI}_m(\lambda;\theta_m)\right]
\label{eq:cei_marg}$. Defining $h_m^{\mathrm{CEI}}(\theta_m;\lambda)
:= \mathrm{CEI}_m(\lambda;\theta_m)$,
the orthogonalized CEI estimator is given by
\begin{equation}
\widehat{\mathrm{CEI}}_m^{\mathrm{orth}}(\lambda)
=\frac{1}{S}\sum_{s=1}^S\left[\mathrm{CEI}_m(\lambda;\theta_m^{(s)})
-\left( \operatorname{Cov}(g_m,g_m)^{-1}
\operatorname{Cov}\!\left(g_m,h_m^{\mathrm{CEI}}\right) \right)^\top g_m(\theta_m^{(s)})\right].
\end{equation}

\begin{corollary}[Orthogonalized constrained EI]
Assume $\mathbb{E}_{q_{m,t}}
\!\left[
\left(\mathrm{CEI}_m(\lambda;\theta_m)\right)^2
\right]
<\infty$. Then Proposition~\ref{prop:generic_extension} applies with $a_m(z;\theta_m)=\mathrm{CEI}_m(\lambda;\theta_m)$.
\end{corollary}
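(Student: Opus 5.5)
The corollary is a direct instantiation of Proposition~\ref{prop:generic_extension}. We set $z=\lambda$ and $a_m(z;\theta_m):=h_m^{\mathrm{CEI}}(\theta_m;\lambda)=\mathrm{CEI}_m(\lambda;\theta_m)$, and then check the proposition's hypotheses one by one. Once they hold, the three conclusions transfer verbatim:
\begin{itemize}
\item unbiasedness of $\widehat{\mathrm{CEI}}_m^{\mathrm{orth}}(\lambda)$ for $\mathrm{CEI}_m^{\mathrm{marg}}(\lambda)$;
\item the variance identity, hence $\operatorname{Var}(\widehat{\mathrm{CEI}}_m^{\mathrm{orth}})\le\operatorname{Var}(\widehat{\mathrm{CEI}}_m^{\mathrm{MC}})$;
\item orthogonality of the residual to the score directions.
\end{itemize}

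\textbf{Hypotheses.} Square integrability of the acquisition value is exactly the assumption $\mathbb{E}_{q_{m,t}}[\mathrm{CEI}_m(\lambda;\theta_m)^2]<\infty$ in the corollary. The remaining hypotheses concern only the score $g_m$ and $q_{m,t}$: $\mathbb{E}_{q_{m,t}}[\|g_m\|_2^2]<\infty$, nonsingularity of $\operatorname{Cov}(g_m,g_m)$, and the boundary condition $\int_\Theta\nabla_\theta q_{m,t}\,\diff\theta=0$. These do not depend on the acquisition function. I read them as the standing regularity assumptions inherited from the paper's analysis, just as the corollary says ``Proposition~\ref{prop:generic_extension} applies''. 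I would state this explicitly.

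\textbf{Measurability.} I would also confirm that $\theta_m\mapsto\mathrm{CEI}_m(\lambda;\theta_m)$ is measurable, so that the covariances $\operatorname{Cov}(g_m,h_m^{\mathrm{CEI}})$ are well defined. This holds because $\mathrm{CEI}_m$ is a conditional expectation of the nonnegative integrand $(f^\star-f_1)_+\prod_{i=2}^M\mathbf{1}\{f_i\le0\}$, which is jointly measurable. A useful side remark: $0\le\mathrm{CEI}_m\le\mathrm{EI}_m$ pointwise, since the indicator product lies in $[0,1]$. Hence the second-moment assumption already holds whenever the EI second-moment condition of Theorem~\ref{thm:variance_reduction} holds for the objective surrogate.

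\textbf{Where the argument lives.} The only real content is in Proposition~\ref{prop:generic_extension}, which I may assume, so no step is a genuine obstacle.
\begin{itemize}
\item Unbiasedness comes from $\mathbb{E}[g_m]=\int\nabla q_{m,t}\,\diff\theta=0$.
\item The variance formula is the usual regression/Schur-complement identity with the optimal coefficient $\Sigma_g^{-1}\operatorname{Cov}(g_m,a_m)$.
\item Orthogonality follows from $\operatorname{Cov}(g_m,r_m)=\operatorname{Cov}(g_m,a_m)-\Sigma_g\Sigma_g^{-1}\operatorname{Cov}(g_m,a_m)=0$.
\end{itemize}

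The one point needing care is notational. The proposition writes $a_m(\theta_m;z)$ and $h_m$ interchangeably with $a_m(z;\theta_m)$, so I would state that $h_m=a_m=\mathrm{CEI}_m(\lambda;\cdot)$. With that identification, the plug-in coefficient in the CEI estimator matches the one in equation~\eqref{eq:generic_orth_acq}.
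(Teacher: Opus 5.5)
Your proposal is correct and follows the paper's route: the paper likewise instantiates Proposition~\ref{prop:generic_extension} with $a_m=\mathrm{CEI}_m(\lambda;\cdot)$, justifying square integrability through the domination $0\le\mathrm{CEI}_m(\lambda;\theta_m)\le\mathrm{EI}_{m,1}(\lambda;\theta_m)$, which you record as a side remark. You instead read square integrability directly off the corollary's hypothesis and make the inherited score-function assumptions explicit, which is the cleaner reading.

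One small caution on that side remark: the pointwise bound needs the EI to use the same feasible incumbent $f^\star$ as $\mathrm{CEI}_m$, as the paper's $\mathrm{EI}_{m,1}$ does, rather than the unconstrained $\min_{i\le t}y_i$ of Theorem~\ref{thm:variance_reduction}. With the unconstrained incumbent the bound can fail pointwise. Square integrability still transfers, though: once a feasible observation exists, the two incumbents differ by a finite nonnegative constant $c$, so $\mathrm{CEI}_m\le\mathrm{EI}_m+c$.
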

\begin{proof}
The indicator product is bounded by $1$, so $0 \le \mathrm{CEI}_m(\lambda;\theta_m) \le \mathrm{EI}_{m,1}(\lambda;\theta_m) := \mathbb{E}\!\left[(f^\star - f_1(\lambda))_+ \mid \mathcal{D}_t,\theta_m\right]$. Square integrability of $\mathrm{EI}_{m,1}$ thus implies square integrability of $\mathrm{CEI}_m$, and the corollary follows from Proposition~\ref{prop:generic_extension}.
\end{proof}

\subsection{Parallel or batch Expected Improvement}
\label{app:parallel_qei}

In parallel BO, we select a batch $\Lambda=(\lambda_1,\dots,\lambda_q)$ of $q$ candidate points at each iteration. The standard batch expected improvement (qEI) is given by
\begin{equation}
\mathrm{qEI}_m(\Lambda;\theta_m) := \mathbb{E}\!\left[ \max_{j=1,\dots,q}
\big(f^\star - f(\lambda_j)\big)_+
\,\middle|\,
\mathcal{D}_t,\theta_m
\right],
\end{equation}
with corresponding marginal target is $\mathrm{qEI}_m^{\mathrm{marg}}(\Lambda) := \mathbb{E}_{\theta_m\sim q_{m,t}} \!\left[\mathrm{qEI}_m(\Lambda;\theta_m)\right]$.

We define $h_m^{\mathrm{qEI}}(\theta_m;\Lambda):=\mathrm{qEI}_m(\Lambda;\theta_m)$. Then the orthogonalized qEI estimator is given by
\begin{equation}
\widehat{\mathrm{qEI}}_m^{\mathrm{orth}}(\Lambda) = \frac{1}{S}\sum_{s=1}^S
\left[ \mathrm{qEI}_m(\Lambda;\theta_m^{(s)}) -
\left( \operatorname{Cov}(g_m,g_m)^{-1}
\operatorname{Cov}\!\left(g_m,h_m^{\mathrm{qEI}}\right) \right)^\top g_m(\theta_m^{(s)}) \right].
\label{eq:qei_orth}
\end{equation}

\begin{corollary}[Orthogonalized batch EI]
Assume $\mathbb{E}_{q_{m,t}} \!\left[\left(\mathrm{qEI}_m(\Lambda;\theta_m)\right)^2
\right]<\infty.$ Then Proposition~\ref{prop:generic_extension} applies with $a_m(z;\theta_m)=\mathrm{qEI}_m(\Lambda;\theta_m)$.
\end{corollary}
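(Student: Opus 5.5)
The corollary reduces to verifying the hypotheses of Proposition~\ref{prop:generic_extension} with $a_m(z;\theta_m)=\mathrm{qEI}_m(\Lambda;\theta_m)$ and $z=\Lambda$. That proposition requires three things: square integrability of the score, $\mathbb{E}_{q_{m,t}}[\|g_m\|_2^2]<\infty$; nonsingularity of $\operatorname{Cov}(g_m,g_m)$; and square integrability of the acquisition value. The first two, together with the standard score-function regularity condition $\int_\Theta \nabla_\theta q_{m,t}\,\diff\theta=0$, concern only $q_{m,t}$. They are standing assumptions carried over from Theorem~\ref{thm:variance_reduction}, and the acquisition function does not enter them. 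The third is exactly the hypothesis of the corollary. So the plan is to check these items in order and then invoke the proposition.

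Step one: state explicitly that the score conditions are independent of the choice of $a_m$, so they transfer verbatim from the EI setting. Step two: note that the proposition uses only measurability and square integrability of $\theta_m\mapsto a_m(z;\theta_m)$, with the batch $\Lambda$ held fixed. For measurability, I would observe that $\mathrm{qEI}_m(\Lambda;\theta_m)$ is a conditional expectation of the nonnegative measurable function $\max_j (f^\star-f(\lambda_j))_+$ of the joint posterior vector $(f(\lambda_1),\dots,f(\lambda_q))$, so it is a measurable function of $\theta_m$. Square integrability is the assumption. Step three: apply Proposition~\ref{prop:generic_extension}. This gives unbiasedness of $\widehat{\mathrm{qEI}}_m^{\mathrm{orth}}(\Lambda)$ for $\mathrm{qEI}_m^{\mathrm{marg}}(\Lambda)$, the variance identity and the inequality against the naive MC estimator, and orthogonality of the residual to the score directions.

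Optionally, in parallel with the constrained case, I would add a sufficient condition so that the assumption is checkable. Pointwise,
\[
\max_j (f^\star-f(\lambda_j))_+\le \sum_{j=1}^q (f^\star-f(\lambda_j))_+ .
\]
Taking conditional expectations gives $0\le \mathrm{qEI}_m(\Lambda;\theta_m)\le \sum_j \mathrm{EI}_m(\lambda_j;\theta_m)$. By $(\sum_j x_j)^2\le q\sum_j x_j^2$, square integrability of each single-point $\mathrm{EI}_m(\lambda_j;\cdot)$, as assumed in Theorem~\ref{thm:variance_reduction}, implies the hypothesis.

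There is no genuine obstacle here. The only point needing care is notational: the proposition writes $a_m(\theta_m;z)$ and $h_m$ interchangeably with $a_m(z;\theta_m)$, and I would remark that these denote the same function. The nonsingularity of $\operatorname{Cov}(g_m,g_m)$ must be assumed rather than derived, and it is inherited from the standing conditions.
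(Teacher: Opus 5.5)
Your proposal is correct and follows the paper's argument: both check the hypotheses of Proposition~\ref{prop:generic_extension} and invoke it. The domination $0\le \mathrm{qEI}_m(\Lambda;\theta)\le\sum_{j=1}^q \mathrm{EI}_m(\lambda_j;\theta)$, which you list as optional, is exactly the step the paper's proof rests on, even though, as you note, the stated hypothesis already gives square integrability directly; your Cauchy--Schwarz bound $(\sum_j x_j)^2\le q\sum_j x_j^2$ and your explicit remark that the score conditions and nonsingularity of $\operatorname{Cov}(g_m,g_m)$ are inherited standing assumptions are small refinements the paper leaves implicit.
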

\begin{proof}
Observe that for every fixed batch $X=(x_1,\dots,x_q)$ and parameter value $\theta$, $0 \le \mathrm{qEI}_m(X;\theta)
\le
\sum_{j=1}^q \mathrm{EI}_m(x_j;\theta)$. Consequently, $\mathrm{qEI}_m(X;\theta)^2
\le
\left(\sum_{j=1}^q \mathrm{EI}_m(x_j;\theta)\right)^2$. In particular, if $\mathbb{E}_{\theta\sim q_{m,t}}
\left[
\left(\sum_{j=1}^q \mathrm{EI}_m(x_j;\theta)\right)^2
\right]
<\infty$, then as well $\mathbb{E}_{\theta\sim q_{m,t}}
\left[
\mathrm{qEI}_m(X;\theta)^2
\right]
<\infty.$ The corollary now directly follows from Proposition~\ref{prop:generic_extension}.
\end{proof}

Unlike standard EI, qEI does not admit a closed-form expression. Therefore, we follow common practice and approximate it by inner MC sampling. Let
\begin{equation}
\widehat{\mathrm{qEI}}_m(\Lambda;\theta_m)
=\frac{1}{N}\sum_{n=1}^N \max_{j=1,\dots,q}\big(f^\star - \xi^{(n)}(\lambda_j)\big)_+,\qquad
\xi^{(n)} \sim p(\cdot \mid \mathcal{D}_t,\theta_m),
\label{eq:qei_inner_mc}
\end{equation}
where the inner MC estimator is assumed to be unbiased for $\mathrm{qEI}_m(\Lambda;\theta_m)$.

The resulting orthogonalized MC estimator is then given by 
\begin{equation}
\widehat{\mathrm{qEI}}_{m,\mathrm{nested}}^{\mathrm{orth}}(\Lambda)
=\frac{1}{S}\sum_{s=1}^S
\left[\widehat{\mathrm{qEI}}_m(\Lambda;\theta_m^{(s)}) - (\operatorname{Cov}(g_m,g_m)^{-1}
\operatorname{Cov}\!(g_m,\widehat{{\mathrm{qEI}}}(\Lambda; \theta_m)))^\top g_m(\theta_m^{(s)})
\right].
\label{eq:qei_nested_orth}
\end{equation}

Importantly, under the assumption that the inner estimator \eqref{eq:qei_inner_mc} is unbiased for $\mathrm{qEI}_m(X;\theta_m)$ for every fixed $\theta_m$, and that all required second moments exist, then $\widehat{\mathrm{qEI}}_{m,\mathrm{nested}}^{\mathrm{orth}}(\Lambda)$ is unbiased for $\mathrm{qEI}_m(X;\theta_m)$ as well due to the law of iterated expectations.

\newpage
\section{Theoretical properties}
\label{sec:appendix_theory}

\subsection{Computational Complexity}
\label{sec:appendix_complexity}

\textbf{GP surrogate.}
For an exact GP surrogate, fitting requires $O(t^3)$ time due to kernel matrix inversion. Per candidate $\lambda$, computing $\mu_t(\lambda;\theta)$ and $\sigma_t^2(\lambda;\theta)$ for a fixed $\theta$ costs $O(t^2)$ in a naive implementation, or $O(t)$ if Cholesky factors are reused appropriately. For $S$ posterior samples, marginal EI estimation costs $O(S t^2)$ or $O(S t)$, depending on the implementation.

The orthogonalization step introduces two types of cost. First, once per BO iteration, one computes the score covariance matrix $\widehat{\Sigma}_g := \widehat{\operatorname{Cov}}(g_t,g_t) \in \mathbb{R}^{d_\theta \times d_\theta}$, which depends only on the surrogate posterior samples and not on the candidate $\lambda$. This costs $O(S d_\theta^2)$, followed by $O(d_\theta^3)$ to invert or factorize $\widehat{\Sigma}_g$. Second, for each candidate $\lambda$, one computes the cross-covariance vector
$\widehat{c}_{gh}(\lambda) := \widehat{\operatorname{Cov}}(g_t,h(\cdot;\lambda))$, which costs $O(S d_\theta)$, followed by $O(d_\theta^2)$ for a matrix--vector solve or multiplication with the precomputed factorization of $\widehat{\Sigma}_g^{-1}$. Hence, the additional per-candidate orthogonalization overhead is $O(S d_\theta + d_\theta^2)$,
and the total per-candidate cost is $O(S t^2 + S d_\theta + d_\theta^2)$,
plus a one-time per-iteration preprocessing cost of $O(S d_\theta^2 + d_\theta^3)$.

We compare the average per-step runtimes of \framework with qLogEI in Table~\ref{tab:runtime_comparison}. We observe that the runtimes vary with the dimensionality of the problem, but the overhead over MC-based qLogEI is often around \SI{2}{\second}. For our implementation, we used the \texttt{BoTorch} framework, which has made MC-sampling highly efficient. We thus expect further runtime reductions from a more improved orthogonalization implementation.

\begin{table}[h]
    \centering
    \caption{Average per-step runtimes (in seconds) for \framework and qLogEI.}
    \label{tab:runtime_comparison}
    \begin{tabular}{lrrrr}
\toprule
 & Hartmann6 & Ackley8 & Michalewicz10 & Levy16 \\
\midrule
qLogEI & 0.696 & 0.156 & 0.325 & 0.582 \\
\framework & 1.607 & 0.492 & 2.362 & 2.563 \\
\bottomrule
\end{tabular}

\end{table}

\textbf{Tree-based / TPE-style surrogate.}
Tree-based or density-based surrogates are typically cheaper to fit. A single fit often requires $O(t \log t)$ or comparable cost, depending on the underlying model. Per-candidate acquisition evaluation is usually $O(1)$ or logarithmic in $t$. As in the GP case, the covariance matrix of the control variate, $\widehat{\Sigma}_c := \widehat{\operatorname{Cov}}(c_t,c_t) \in \mathbb{R}^{d_c \times d_c}$,
depends only on the posterior samples and can therefore be computed once per BO iteration at cost $O(S d_c^2)$, followed by $O(d_c^3)$ for inversion or factorization. For each candidate, one then computes the cross-covariance vector $\widehat{c}_{ch}(\lambda) := \widehat{\operatorname{Cov}}(c_t,h(\cdot;\lambda))$,
at cost $O(S d_c)$, followed by $O(d_c^2)$ for the matrix--vector solve. Hence, the additional per-candidate orthogonalization overhead is
$O(S d_c + d_c^2)$, with one-time per-iteration preprocessing cost
$O(S d_c^2 + d_c^3)$.

We report average per-step runtimes for orthogonalized TPE versus MC-based TPE in Table~\ref{tab:tpe_runtimes}. Runtime differences are generally small.

\begin{table}[h]
    \centering
        \caption{Average per-step runtimes (in seconds) for \framework and MC-based TPE.}
    \label{tab:tpe_runtimes}
\begin{tabular}{lrrrr}
\toprule
 & Hartmann6 & Ackley8 & Michalewicz10 & Levy16 \\
\midrule
MC-TPE & 0.224 & 0.347 & 0.356 & 0.754 \\
\framework & 0.390 & 0.736 & 2.533 & 0.360 \\
\bottomrule
\end{tabular}

\end{table}

\textbf{Ensemble overhead.}
For an ensemble of $M$ surrogate models, the total cost scales linearly as $O\!\left(\sum_{m=1}^M \text{cost}_m\right).$ In practice, the main computational advantage of debiasing through orthogonalization is not that it makes each acquisition evaluation cheaper, but that it can achieve comparable acquisition stability with a smaller MC budget $S$.

\subsection{Variance reduction, robustness and ranking stability for GP and TPE}

We provide versions of Theorem~\ref{thm:variance_reduction},  and Proposition~\ref{prop:ranking} for the two surrogate instantiations GP and TPE. The proofs follow directly from the proofs of the original theorem and proposition in Supplement~\ref{sec:appendix_proofs}. 

\begin{theorem}[Variance reduction for the GP surrogate]
\label{thm:variance_gp}
Fix an iteration $t$ and a candidate $\lambda \in \Lambda$. Assume $\mathbb{E}_{q_t}\|g_t(\theta)\|_2^2 < \infty$, $\mathbb{E}_{q_t}[(\mathrm{EI}^{\mathrm{GP}}(\lambda;\theta))^2] < \infty$, and that $\operatorname{Cov}(g_t,g_t)$ is nonsingular. Define
\begin{align}
    \mathrm{EI}^{\mathrm{GP,orth}}(\lambda;\theta) := \mathrm{EI}^{\mathrm{GP}}(\lambda;\theta) - \gamma^{\mathrm{GP}}(\lambda)^\top g_t(\theta).
\end{align}
Then:

(i) \textbf{Target preservation:} $\mathbb{E}_{q_t}[\mathrm{EI}^{\mathrm{GP,orth}}(\lambda;\theta)] = \mathrm{EI}^{\mathrm{GP,marg}}(\lambda)$.

(ii) \textbf{Variance reduction:}
    \begin{align}
    \operatorname{Var}\!\big(\mathrm{EI}^{\mathrm{GP,orth}}(\lambda;\theta)\big)
    = \operatorname{Var}\!\big(\mathrm{EI}^{\mathrm{GP}}(\lambda;\theta)\big)
    - \operatorname{Cov}(\mathrm{EI}^{\mathrm{GP}},g_t)^\top
    \operatorname{Cov}(g_t,g_t)^{-1}
    \operatorname{Cov}(g_t,\mathrm{EI}^{\mathrm{GP}}),
    \end{align}
and therefore $\operatorname{Var}(\mathrm{EI}^{\mathrm{GP,orth}}(\lambda;\theta)) \le \operatorname{Var}(\mathrm{EI}^{\mathrm{GP}}(\lambda;\theta))$.
By the i.i.d.\ MC variance identity, the corresponding estimator satisfies $\operatorname{Var}(\widehat{\mathrm{EI}}^{\mathrm{GP,orth}}(\lambda)) \le \operatorname{Var}(\widehat{\mathrm{EI}}^{\mathrm{GP,MC}}(\lambda))$ for any MC budget $S$.
\end{theorem}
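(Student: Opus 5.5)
The plan is to reduce Theorem~\ref{thm:variance_gp} to Theorem~\ref{thm:variance_reduction} by specializing $q_{m,t}=q_t$, $g=g_t$ and $\mathrm{EI}_m(\lambda;\theta)=\mathrm{EI}^{\mathrm{GP}}(\lambda;\theta)$, so that $\gamma^{\mathrm{GP}}(\lambda)=\gamma_m(\lambda)$. Nothing in the general argument uses the GP structure of the integrand beyond square integrability, which is assumed here. I will nevertheless spell out the two steps to keep the proof self-contained.

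\emph{Step 1 (target preservation).} First show that the score has mean zero:
\begin{equation*}
\mathbb{E}_{q_t}[g_t(\theta)] = \int_\Theta \nabla_\theta \log q_t(\theta)\, q_t(\theta)\diff\theta = \int_\Theta \nabla_\theta q_t(\theta)\diff\theta = 0,
\end{equation*}
by the boundary condition stated before Theorem~\ref{thm:variance_reduction}. Finiteness of $\mathbb{E}\|g_t\|_2^2$ makes this integral well defined. Since $\gamma^{\mathrm{GP}}(\lambda)$ is a deterministic vector for fixed $\lambda$ and $t$, linearity gives $\mathbb{E}_{q_t}[\mathrm{EI}^{\mathrm{GP,orth}}]=\mathbb{E}_{q_t}[\mathrm{EI}^{\mathrm{GP}}]=\mathrm{EI}^{\mathrm{GP,marg}}(\lambda)$.

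\emph{Step 2 (variance identity).} Write $h=\mathrm{EI}^{\mathrm{GP}}(\lambda;\cdot)$, $c=\operatorname{Cov}(g_t,h)$ and $\Sigma=\operatorname{Cov}(g_t,g_t)$. Both second moments are finite, so by Cauchy--Schwarz $c$ is finite. Expanding the variance of a difference gives
\begin{equation*}
\operatorname{Var}(h-\gamma^\top g_t)=\operatorname{Var}(h)-2\gamma^\top c+\gamma^\top\Sigma\gamma .
\end{equation*}
Substituting $\gamma=\Sigma^{-1}c$, which requires $\Sigma$ to be nonsingular, yields $\operatorname{Var}(h)-c^\top\Sigma^{-1}c$. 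Because $\Sigma$ is symmetric positive semidefinite and nonsingular, it is positive definite, so $\Sigma^{-1}\succ 0$ and $c^\top\Sigma^{-1}c\ge 0$. This gives the inequality. The same computation shows that the residual $h-\gamma^\top g_t$ has zero covariance with $g_t$, since $c-\Sigma\gamma=0$; this is the orthogonality interpretation.

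\emph{Step 3 (MC estimator).} The draws $\theta^{(s)}$ are i.i.d.\ from $q_t$, so both estimators are sample means of i.i.d.\ terms. Hence $\operatorname{Var}(\widehat{\mathrm{EI}}^{\mathrm{GP,orth}})=\operatorname{Var}(\mathrm{EI}^{\mathrm{GP,orth}})/S$, and likewise for the naive MC estimator. The per-sample inequality from Step 2 therefore transfers to every $S$.

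The only point needing care is that $\gamma^{\mathrm{GP}}$ is treated as the population coefficient, not the plug-in $\hat\gamma$. The statement is about the oracle estimator, so no $O(1/S)$ bias term enters. The main obstacle, if any, is verifying the boundary condition for a concrete $q_t$, and that is assumed. For a Laplace or Gaussian variational $q_t$ it holds trivially, because Gaussian tails make $\int\nabla q_t=0$.
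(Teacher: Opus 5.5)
Your proposal is correct and follows essentially the same route as the paper, which obtains the GP version by directly specializing the proof of Theorem~\ref{thm:variance_reduction}: the score has zero mean because the boundary term vanishes, and the quadratic variance expansion evaluated at $\gamma=\Sigma^{-1}c$, together with positive definiteness of $\Sigma$, gives the identity and the inequality. The only cosmetic difference is in that second step: the paper minimizes the quadratic over a generic coefficient $a$, which also shows that $\gamma$ is the optimal control-variate weight, whereas you substitute $\gamma$ directly; both give the same result.
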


\begin{proposition}[Ranking stability for the GP surrogate]
\label{prop:ranking_gp}
Let $\lambda,\lambda' \in \Lambda$ with $\Delta^{\mathrm{GP}}(\lambda,\lambda') := \mathrm{EI}^{\mathrm{GP,marg}}(\lambda) - \mathrm{EI}^{\mathrm{GP,marg}}(\lambda') > 0$. Let $\widehat\Delta^{\mathrm{GP,orth}}$ denote the MC estimator obtained by applying the orthogonalization construction in Theorem~\ref{thm:variance_gp} to the difference functional $\mathrm{EI}^{\mathrm{GP}}(\lambda;\theta) - \mathrm{EI}^{\mathrm{GP}}(\lambda';\theta)$. Then
\begin{align}
\mathbb{P}\!\left(\widehat\Delta^{\mathrm{GP,orth}} \le 0\right)
\;\le\; \frac{\operatorname{Var}(\widehat\Delta^{\mathrm{GP,orth}})}{(\Delta^{\mathrm{GP}}(\lambda,\lambda'))^2}
\;\le\; \frac{\operatorname{Var}(\widehat\Delta^{\mathrm{GP,MC}})}{(\Delta^{\mathrm{GP}}(\lambda,\lambda'))^2}.
\end{align}
\end{proposition}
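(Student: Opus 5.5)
The plan is to reduce the statement to Theorem~\ref{thm:variance_gp} applied to a single functional, namely the difference $D(\theta) := \mathrm{EI}^{\mathrm{GP}}(\lambda;\theta) - \mathrm{EI}^{\mathrm{GP}}(\lambda';\theta)$, and then finish with Chebyshev's inequality.

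\textbf{Step 1 (set-up).} Both $\mathrm{EI}^{\mathrm{GP}}(\lambda;\cdot)$ and $\mathrm{EI}^{\mathrm{GP}}(\lambda';\cdot)$ are square integrable under $q_t$. By Minkowski's inequality, $D$ is therefore square integrable as well. The orthogonalized difference estimator is
\[
\widehat\Delta^{\mathrm{GP,orth}} = \frac1S\sum_{s=1}^S \bigl[D(\theta^{(s)}) - \gamma_D^\top g_t(\theta^{(s)})\bigr],
\qquad
\gamma_D = \operatorname{Cov}(g_t,g_t)^{-1}\operatorname{Cov}(g_t,D).
\]
Since covariance is linear, $\gamma_D = \gamma^{\mathrm{GP}}(\lambda) - \gamma^{\mathrm{GP}}(\lambda')$. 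So this estimator coincides with the difference of the two orthogonalized EI estimators computed on common samples, and no separate construction is needed. The naive estimator $\widehat\Delta^{\mathrm{GP,MC}}$ is the sample mean of $D(\theta^{(s)})$.

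\textbf{Step 2 (unbiasedness and variance ordering).} Apply Theorem~\ref{thm:variance_gp} with $\mathrm{EI}^{\mathrm{GP}}$ replaced by the generic functional $D$. This is legitimate because the proof only uses square integrability and the score identity $\mathbb{E}_{q_t}[g_t]=0$; equivalently, invoke Proposition~\ref{prop:generic_extension} with $a_m = D$. Part (i) gives $\mathbb{E}[\widehat\Delta^{\mathrm{GP,orth}}] = \Delta^{\mathrm{GP}}(\lambda,\lambda')$, and the same holds trivially for the MC estimator. Part (ii) together with the i.i.d.\ identity $\operatorname{Var}(\bar X_S)=\operatorname{Var}(X)/S$ gives
\[
\operatorname{Var}(\widehat\Delta^{\mathrm{GP,orth}}) = \tfrac1S\bigl(\operatorname{Var}(D) - \operatorname{Cov}(D,g_t)^\top\operatorname{Cov}(g_t,g_t)^{-1}\operatorname{Cov}(g_t,D)\bigr) \le \operatorname{Var}(\widehat\Delta^{\mathrm{GP,MC}}).
\]
The inequality holds because the subtracted quadratic form is nonnegative: $\operatorname{Cov}(g_t,g_t)^{-1}$ is positive definite.

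\textbf{Step 3 (tail bound).} Write $\Delta := \Delta^{\mathrm{GP}}(\lambda,\lambda') > 0$. The event $\{\widehat\Delta^{\mathrm{GP,orth}}\le 0\}$ is contained in $\{|\widehat\Delta^{\mathrm{GP,orth}} - \Delta| \ge \Delta\}$. Chebyshev's inequality then bounds its probability by $\operatorname{Var}(\widehat\Delta^{\mathrm{GP,orth}})/\Delta^2$. The second inequality in the statement follows by dividing the variance ordering from Step 2 by $\Delta^2$.

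\textbf{Main obstacle.} The only delicate point is making precise that the theorem applies to $D$ rather than to EI at a single point, and that $\gamma$ is the population coefficient. With a plug-in $\hat\gamma$, the exact unbiasedness and variance identity fail at order $O(1/S)$; this would need the cross-fitting remark. I would state the result for the population $\gamma$, as the statement implicitly does.
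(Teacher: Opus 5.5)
Your proof is correct and takes essentially the paper's route. The paper only says the GP version follows from its proof of Proposition~\ref{prop:ranking}, which combines unbiasedness and the variance ordering from the variance-reduction theorem with a one-sided tail bound. The one difference is the inequality used: the paper applies Cantelli, which gives the sharper bound $\operatorname{Var}/(\operatorname{Var}+\Delta^2)$ and hence the stated one, while your two-sided Chebyshev argument gives exactly the stated $\operatorname{Var}/\Delta^2$; your identity $\gamma_D=\gamma^{\mathrm{GP}}(\lambda)-\gamma^{\mathrm{GP}}(\lambda')$ on common samples also makes explicit a point the paper leaves implicit.
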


\begin{theorem}[Variance reduction for the tree-based / TPE-style surrogate]
\label{thm:variance_tpe}
Fix an iteration $t$ and a candidate $\lambda \in \Lambda$. Assume $\mathbb{E}_{q_t}[\|c_t(\theta)\|_2^2] < \infty$, $\mathbb{E}_{q_t}[(h^{\mathrm{TPE}}(\lambda;\theta))^2] < \infty$, $\mathbb{E}_{q_t}[c_t(\theta)] = 0$, and that $\operatorname{Cov}(c_t,c_t)$ is nonsingular. Define
\begin{align}
    h^{\mathrm{TPE,orth}}(\lambda;\theta) := h^{\mathrm{TPE}}(\lambda;\theta) - \gamma^{\mathrm{TPE}}(\lambda)^\top c_t(\theta).
\end{align}
Then:

(i) \textbf{Target preservation:} $\mathbb{E}_{q_t}[h^{\mathrm{TPE,orth}}(\lambda;\theta)] = \mathrm{EI}^{\mathrm{TPE,marg}}(\lambda)$.

(ii) \textbf{Variance reduction:}
    \begin{align}
    \operatorname{Var}\!\big(h^{\mathrm{TPE,orth}}(\lambda;\theta)\big)
    = \operatorname{Var}\!\big(h^{\mathrm{TPE}}(\lambda;\theta)\big)
    - \operatorname{Cov}(h^{\mathrm{TPE}},c_t)^\top
    \operatorname{Cov}(c_t,c_t)^{-1}
    \operatorname{Cov}(c_t,h^{\mathrm{TPE}}),
    \end{align}
and therefore $\operatorname{Var}(h^{\mathrm{TPE,orth}}(\lambda;\theta)) \le \operatorname{Var}(h^{\mathrm{TPE}}(\lambda;\theta))$.
By the i.i.d.\ MC variance identity, the corresponding estimator satisfies $\operatorname{Var}(\widehat{\mathrm{EI}}^{\mathrm{TPE,orth}}(\lambda)) \le \operatorname{Var}(\widehat{\mathrm{EI}}^{\mathrm{TPE,MC}}(\lambda))$ for any MC budget $S$.
\end{theorem}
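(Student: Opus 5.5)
This is the standard linear control-variate argument, run with the generic zero-mean vector $c_t$ in place of the score $g_t$. The one point that differs from Theorem~\ref{thm:variance_reduction} is where the zero-mean property comes from. In the score case it follows from the boundary condition $\int_\Theta \nabla_\theta q_{m,t} = 0$. Here it is simply assumed, $\mathbb{E}_{q_t}[c_t(\theta)] = 0$, so no differentiability of $q_t$ is needed. Throughout, write $h := h^{\mathrm{TPE}}(\lambda;\cdot)$, $\Sigma_c := \operatorname{Cov}(c_t,c_t)$, $v := \operatorname{Cov}(c_t,h)$, and $\gamma := \gamma^{\mathrm{TPE}}(\lambda) = \Sigma_c^{-1}v$. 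Since $\gamma$ is deterministic given $q_t$ and $\lambda$, it can be pulled out of all expectations.

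\textbf{Target preservation.} The moment assumptions $\mathbb{E}\|c_t\|_2^2<\infty$ and $\mathbb{E}[h^2]<\infty$ give integrability of $h$ and $c_t$ via Cauchy--Schwarz. Linearity of expectation then yields
\[
\mathbb{E}_{q_t}[h - \gamma^\top c_t] = \mathbb{E}_{q_t}[h] - \gamma^\top \mathbb{E}_{q_t}[c_t] = \mathbb{E}_{q_t}[h].
\]
By definition the right-hand side is the marginal target $\mathrm{EI}^{\mathrm{TPE,marg}}(\lambda)$.

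\textbf{Variance identity.} The same moment conditions, again via Cauchy--Schwarz, ensure that all covariances below are finite. For a fixed vector $\gamma$, bilinearity of covariance gives
\[
\operatorname{Var}(h-\gamma^\top c_t) = \operatorname{Var}(h) - 2\gamma^\top v + \gamma^\top \Sigma_c \gamma.
\]
Substituting $\gamma = \Sigma_c^{-1}v$ and using the symmetry of $\Sigma_c$ collapses the last two terms to $-v^\top\Sigma_c^{-1}v$. This is exactly the stated identity, since $\operatorname{Cov}(h,c_t)^\top = v^\top$.

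\textbf{Inequality and the estimator.} $\Sigma_c$ is a covariance matrix, hence positive semidefinite, and it is assumed nonsingular, so it is positive definite. Its inverse is therefore positive definite as well, giving $v^\top\Sigma_c^{-1}v\ge 0$ and hence the variance inequality. For the estimator statement, the draws $\theta^{(s)}$ are i.i.d. from $q_t$, so each of the two estimators is a sample mean of $S$ i.i.d. copies of its summand. Its variance is therefore the per-sample variance divided by $S$, and the per-sample inequality transfers to every $S$.

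\textbf{Main obstacle.} The only subtle point is that this argument treats $\gamma$ as the population coefficient. With the empirical plug-in $\hat\gamma$, the exact identity fails at finite $S$, as the paper notes in the discussion following Eq.~\eqref{eq:ortho_EI}. I would state the result for the oracle $\gamma$ and mention cross-fitting as the route to exactness with an estimated $\gamma$. Everything else is routine.
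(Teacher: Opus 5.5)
Your proposal is correct and follows the paper's argument, which reruns the proof of Theorem~\ref{thm:variance_reduction} with the assumed $\mathbb{E}_{q_t}[c_t]=0$ replacing the score identity. The steps match: linearity for target preservation, the expansion of $\operatorname{Var}(h - a^\top c_t)$ evaluated at $a = \Sigma_c^{-1}\operatorname{Cov}(c_t,h)$ plus positive definiteness of $\Sigma_c$ for the variance bound, and the i.i.d.\ sample-mean identity for the estimator; the paper phrases the middle step as minimizing the quadratic over $a$, whereas you substitute $\gamma$ directly, which makes no difference.
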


\begin{proposition}[Ranking stability for the tree-based / TPE-style surrogate]
\label{prop:ranking_tpe}
Let $\lambda,\lambda' \in \Lambda$ with $\Delta^{\mathrm{TPE}}(\lambda,\lambda') := \mathrm{EI}^{\mathrm{TPE,marg}}(\lambda) - \mathrm{EI}^{\mathrm{TPE,marg}}(\lambda') > 0$. Let $\widehat\Delta^{\mathrm{TPE,orth}}$ denote the MC estimator obtained by applying the orthogonalization construction in Theorem~\ref{thm:variance_tpe} to the difference functional $h^{\mathrm{TPE}}(\lambda;\theta) - h^{\mathrm{TPE}}(\lambda';\theta)$. Then
\begin{align}
\mathbb{P}\!\left(\widehat\Delta^{\mathrm{TPE,orth}} \le 0\right)
\;\le\; \frac{\operatorname{Var}(\widehat\Delta^{\mathrm{TPE,orth}})}{(\Delta^{\mathrm{TPE}}(\lambda,\lambda'))^2}
\;\le\; \frac{\operatorname{Var}(\widehat\Delta^{\mathrm{TPE,MC}})}{(\Delta^{\mathrm{TPE}}(\lambda,\lambda'))^2}.
\end{align}
\end{proposition}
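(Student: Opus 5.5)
The plan is to reduce the claim to Theorem~\ref{thm:variance_tpe} applied to a single scalar functional, namely the difference
\[
d(\theta) := h^{\mathrm{TPE}}(\lambda;\theta) - h^{\mathrm{TPE}}(\lambda';\theta),
\]
and then finish with Chebyshev's inequality. Throughout, $\gamma$ is taken to be the population (oracle) coefficient, as in the statement of Theorem~\ref{thm:variance_tpe}. The empirical plug-in $\hat\gamma$ is not covered by this argument; one would need cross-fitting to keep exact unbiasedness, as noted in the main text.

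\textbf{Step 1: orthogonalized difference estimator.} I define $\widehat\Delta^{\mathrm{TPE,orth}} = \frac1S\sum_{s}\big[d(\theta^{(s)}) - \gamma_d^\top c_t(\theta^{(s)})\big]$ with
\[
\gamma_d = \operatorname{Cov}(c_t,c_t)^{-1}\operatorname{Cov}(c_t,d).
\]
By bilinearity of the covariance, $\gamma_d = \gamma^{\mathrm{TPE}}(\lambda)-\gamma^{\mathrm{TPE}}(\lambda')$. Hence this estimator equals the difference of the two per-candidate orthogonalized estimators computed with common samples. The MC counterpart is $\widehat\Delta^{\mathrm{TPE,MC}}=\frac1S\sum_s d(\theta^{(s)})$.

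\textbf{Step 2: hypotheses of Theorem~\ref{thm:variance_tpe} for $d$.} Square integrability of $d$ follows from $(a-b)^2\le 2a^2+2b^2$ together with square integrability of $h^{\mathrm{TPE}}$ at both $\lambda$ and $\lambda'$. The assumptions $\mathbb{E}[c_t]=0$, $\mathbb{E}\|c_t\|^2<\infty$ and nonsingularity of $\operatorname{Cov}(c_t,c_t)$ are inherited unchanged.

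\textbf{Step 3: unbiasedness and the variance comparison.} Part (i) of Theorem~\ref{thm:variance_tpe}, with linearity of expectation, gives $\mathbb{E}[\widehat\Delta^{\mathrm{TPE,orth}}]=\mathbb{E}[\widehat\Delta^{\mathrm{TPE,MC}}]=\Delta^{\mathrm{TPE}}(\lambda,\lambda')$. Part (ii), combined with the i.i.d.\ identity $\operatorname{Var}(\text{sample mean})=\operatorname{Var}(\text{summand})/S$, gives $\operatorname{Var}(\widehat\Delta^{\mathrm{TPE,orth}})\le\operatorname{Var}(\widehat\Delta^{\mathrm{TPE,MC}})$. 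This is the second inequality after dividing by $(\Delta^{\mathrm{TPE}})^2>0$.

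\textbf{Step 4: Chebyshev bound.} Since $\Delta:=\Delta^{\mathrm{TPE}}>0$, the event $\{\widehat\Delta^{\mathrm{TPE,orth}}\le 0\}$ is contained in $\{|\widehat\Delta^{\mathrm{TPE,orth}}-\Delta|\ge\Delta\}$. Chebyshev's inequality applied to this event yields the first inequality. If desired, the one-sided Cantelli inequality gives the sharper bound $\operatorname{Var}/(\operatorname{Var}+\Delta^2)$ of Proposition~\ref{prop:ranking}.

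\textbf{Main obstacle.} There is no real technical difficulty. The only point needing care is Step 1: the control-variate coefficient must be chosen optimally for the difference functional itself, and linearity guarantees this agrees with the difference of the per-candidate coefficients. Otherwise, orthogonalizing $\lambda$ and $\lambda'$ separately with different coefficient choices would not automatically yield a variance reduction for the difference.
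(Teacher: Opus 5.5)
Your proposal is correct and follows the paper's route: the paper proves this by deferring to the proof of Proposition~\ref{prop:ranking}, namely unbiasedness via target preservation, the variance comparison via Theorem~\ref{thm:variance_tpe}(ii) and the i.i.d.\ identity, and a one-sided tail bound. The differences are minor: the paper's underlying argument uses Cantelli, giving $\operatorname{Var}/(\operatorname{Var}+\Delta^2)$, which implies the stated bound, whereas your Chebyshev step gives the stated $\operatorname{Var}/\Delta^2$ form directly; and your check that $\gamma_d=\gamma^{\mathrm{TPE}}(\lambda)-\gamma^{\mathrm{TPE}}(\lambda')$ under common samples is a detail the paper leaves implicit.
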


\newpage
\section{Proofs of the main theorems and propositions}
\label{sec:appendix_proofs}

\subsection{Proof of Theorem~\ref{thm:variance_reduction}}\label{sec:proof_variance_reduction}

\textbf{Theorem~\ref{thm:variance_reduction}}
Assume $\mathbb{E}_{q_{m,t}}[\mathrm{EI}_m(\lambda;\theta)^2] < \infty$ and $\mathbb{E}_{q_{m,t}}[\|g(\theta)\|_2^2] < \infty$, that $\Sigma_g$ is nonsingular, and that the standard score-function regularity conditions hold for $q_{m,t}$. Then:

\textbf{(i) Preserved target.}
$\mathrm{EI}_m^{\mathrm{orth, marg}}(\lambda) = \mathrm{EI}_m^{\mathrm{marg}}(\lambda)$.

\textbf{(ii) Variance reduction.} Under $q_{m,t}$,
\begin{equation}
    \operatorname{Var}\!\big(\mathrm{EI}_m^{\mathrm{orth}}(\lambda; \theta)\big)
    = \operatorname{Var}\!\big(\mathrm{EI}_m(\lambda; \theta)\big)
    - \operatorname{Cov}(g, \mathrm{EI}_m)^\top \Sigma_g^{-1} \operatorname{Cov}(g, \mathrm{EI}_m) \leq \operatorname{Var}\!\big(\mathrm{EI}_m(\lambda; \theta)\big).
\end{equation}

\begin{proof}
Let $h(\theta) := \mathrm{EI}_m(\lambda;\theta)$ and $g(\theta) := \nabla_\theta \log q_{m,t}(\theta)$, and write $\gamma := \gamma_m(\lambda) = \Sigma_g^{-1}\operatorname{Cov}(g, h)$. Then
\begin{align}
\mathrm{EI}_m^{\mathrm{orth}}(\lambda; \theta) = h(\theta) - \gamma^\top g(\theta).
\end{align}

\paragraph{(i) Preserved target.}
Under the assumed regularity conditions, the score function satisfies
\begin{align}
\mathbb{E}_{q_{m,t}}[g(\theta)]
= \int \nabla_\theta \log q_{m,t}(\theta)\, q_{m,t}(\theta)\, d\theta
= \int \nabla_\theta q_{m,t}(\theta)\, d\theta
= 0.
\end{align}
Hence
\begin{align}
\mathrm{EI}_m^{\mathrm{orth, marg}}(\lambda)
= \mathbb{E}_{q_{m,t}}[h(\theta)] - \gamma^\top \mathbb{E}_{q_{m,t}}[g(\theta)]
= \mathbb{E}_{q_{m,t}}[h(\theta)]
= \mathrm{EI}_m^{\mathrm{marg}}(\lambda).
\end{align}

\paragraph{(ii) Variance reduction.}
For any $a \in \mathbb{R}^{d_\theta}$,
\begin{align}
\operatorname{Var}_{q_{m,t}}(h - a^\top g)
= \operatorname{Var}(h) - 2 a^\top \operatorname{Cov}(g, h) + a^\top \Sigma_g\, a.
\end{align}
Since $\Sigma_g$ is positive definite by assumption, this quadratic in $a$ is uniquely minimized at
\begin{align}
a^\star = \Sigma_g^{-1} \operatorname{Cov}(g, h) = \gamma.
\end{align}
Substituting $a = \gamma$ gives
\begin{align}
\operatorname{Var}\!\big(\mathrm{EI}_m^{\mathrm{orth}}(\lambda; \theta)\big)
= \operatorname{Var}(h) - \operatorname{Cov}(g, h)^\top \Sigma_g^{-1} \operatorname{Cov}(g, h).
\end{align}
Because $\Sigma_g$ is positive definite, the quadratic form $\operatorname{Cov}(g, h)^\top \Sigma_g^{-1} \operatorname{Cov}(g, h) \ge 0$, and therefore
\begin{align}
\operatorname{Var}\!\big(\mathrm{EI}_m^{\mathrm{orth}}(\lambda; \theta)\big)
\;\leq\; \operatorname{Var}\!\big(\mathrm{EI}_m(\lambda; \theta)\big).
\end{align}
\end{proof}

\subsection{Proof of Corollary~\ref{cor:robustness}}

\textbf{Corollary~\ref{cor:robustness}}
Assume the conditions of Theorem~\ref{thm:variance_reduction}. For any $b \in \mathbb{R}^{d_\theta}$, consider the tilted family $q_{m,t}^{(\varepsilon)}(\theta) \propto q_{m,t}(\theta)\exp\!\big(\varepsilon\, b^\top g(\theta)\big)$ for small $|\varepsilon|$. Holding $\gamma_m(\lambda)$ fixed at its $\varepsilon=0$ value, it holds that
\vspace{-0.2cm}
\begin{equation}
    \left.\frac{\diff}{\diff\varepsilon}\, \mathbb{E}_{q_{m,t}^{(\varepsilon)}}\!\left[\mathrm{EI}_m^{\mathrm{orth}}(\lambda; \theta)\right]
    \right|_{\varepsilon=0}
    \;=\; 0.
\end{equation}

\begin{proof}
Throughout, we hold $\gamma$ fixed at its $\varepsilon = 0$ value. Consider the exponentially tilted family
\begin{align}
q_{m,t}^{(\varepsilon)}(\theta)
= \frac{q_{m,t}(\theta)\exp(\varepsilon\, b^\top g(\theta))}{Z(\varepsilon)},
\qquad Z(\varepsilon) = \int q_{m,t}(\theta)\exp(\varepsilon\, b^\top g(\theta))\, d\theta.
\end{align}
For any integrable test function $\varphi$, differentiation under the integral sign (justified by the square-integrability assumptions and dominated convergence in a neighborhood of $\varepsilon = 0$) yields
\begin{align}
\frac{d}{d\varepsilon}\,\mathbb{E}_{q_{m,t}^{(\varepsilon)}}[\varphi(\theta)]
\bigg|_{\varepsilon=0}
= \operatorname{Cov}_{q_{m,t}}\!\big(\varphi(\theta), b^\top g(\theta)\big).
\end{align}
Apply with $\varphi(\theta) = \mathrm{EI}_m^{\mathrm{orth}}(\lambda; \theta) = h(\theta) - \gamma^\top g(\theta)$:
\begin{align}
\operatorname{Cov}\!\big(g,\, h - \gamma^\top g\big)
= \operatorname{Cov}(g, h) - \Sigma_g\, \gamma
= \operatorname{Cov}(g, h) - \Sigma_g \Sigma_g^{-1} \operatorname{Cov}(g, h)
= 0.
\end{align}
Hence
\begin{align}
\left.\frac{d}{d\varepsilon}\,
\mathbb{E}_{q_{m,t}^{(\varepsilon)}}\!\left[\mathrm{EI}_m^{\mathrm{orth}}(\lambda; \theta)\right]
\right|_{\varepsilon=0}
= b^\top \operatorname{Cov}\!\big(g,\, h - \gamma^\top g\big) = 0
\end{align}
for every $b \in \mathbb{R}^{d_\theta}$. The orthogonalized acquisition is therefore first-order insensitive to score-tilt perturbations of $q_{m,t}$.
\end{proof}

\subsection{Proof of Proposition~\ref{prop:ranking}}

\textbf{Proposition~\ref{prop:ranking}}
Let $\lambda, \lambda' \in \Lambda$ such that $\Delta(\lambda,\lambda') := \mathrm{EI}_m^{\mathrm{marg}}(\lambda) - \mathrm{EI}_m^{\mathrm{marg}}(\lambda') > 0$. Let $\widehat{\Delta}_{\mathrm{MC}}(\lambda,\lambda')$ and $\widehat{\Delta}_{\mathrm{orth}}(\lambda,\lambda')$ denote the corresponding Monte Carlo difference estimators. 
Then both estimators are unbiased for $\Delta(\lambda,\lambda')$, and 
\begin{equation}
    \mathbb{P}\!\left(\widehat{\Delta}_{\mathrm{orth}}(\lambda,\lambda') \le 0\right)
    \;\leq\; \frac{\operatorname{Var}\!\left(\widehat{\Delta}_{\mathrm{orth}}(\lambda,\lambda')\right)}{\operatorname{Var}\!\left(\widehat{\Delta}_{\mathrm{orth}}(\lambda,\lambda')\right) + \Delta(\lambda,\lambda')^2}
    \;\leq\; \frac{\operatorname{Var}\!\left(\widehat{\Delta}_{\mathrm{MC}}(\lambda,\lambda')\right)}{\operatorname{Var}\!\left(\widehat{\Delta}_{\mathrm{MC}}(\lambda,\lambda')\right) + \Delta(\lambda,\lambda')^2}.
\end{equation}

\begin{proof}
By unbiasedness from Theorem~\ref{thm:variance_reduction},
\begin{equation}
\mathbb{E}\!\left[\widehat{\Delta}_{\mathrm{orth}}(\lambda,\lambda')\right]
= \Delta(\lambda,\lambda').
\end{equation}
Hence
\begin{equation}
\mathbb{P}\big(\widehat{\Delta}_{\mathrm{orth}}(\lambda,\lambda') \le 0\big)
= \mathbb{P}\!\left(\widehat{\Delta}_{\mathrm{orth}}(\lambda,\lambda') - \Delta(\lambda,\lambda') \le -\Delta(\lambda,\lambda') \right).
\end{equation}
Applying Cantelli's inequality to the centered random variable
\begin{equation}
Z:=\widehat{\Delta}_{\mathrm{orth}}(\lambda,\lambda')-\Delta(\lambda,\lambda')
\end{equation}
gives
\begin{equation}
\mathbb{P}(Z \le -a) \le \frac{\operatorname{Var}(Z)}{\operatorname{Var (Z)+a^2}}, \qquad a>0.
\end{equation}
With $a=\Delta(\lambda,\lambda')$, this yields
\begin{equation}
\mathbb{P}\big(\widehat{\Delta}_{\mathrm{orth}}(\lambda,\lambda') \le 0\big)
\le \frac{\operatorname{Var}\!\left(\widehat{\Delta}_{\mathrm{orth}}(\lambda,\lambda')\right)}{
\operatorname{Var}\!\left(\widehat{\Delta}_{\mathrm{orth}}(\lambda,\lambda')\right) + \Delta(\lambda,\lambda')^2}.
\end{equation}
The second inequality follows from Theorem~\ref{thm:variance_reduction}, which gives
\begin{equation}
\operatorname{Var}\!\left(\widehat{\Delta}_{\mathrm{orth}}(\lambda,\lambda')\right) \le \operatorname{Var}\!\left(\widehat{\Delta}_{\mathrm{MC}}(\lambda,\lambda')\right).
\end{equation}
Since the map $v \mapsto v/(v+\Delta^2)$ is increasing for $v \ge 0$, the result follows.
\end{proof}

\newpage
\section{Implementation details}\label{appendix:implementation_details}

\textbf{General implementation details} All experiments were conducted on a proprietary compute cluster. Our experiments utilized 4 CPU cores, 16 GB of main memory, and a A100 MIG slice with 7GB of VRAM. We used \texttt{BoTorch} for all code implementations. In total, we use $\sim$ 1000 GPU hours including initial experimentation and bug fixing.

\textbf{License details} We used BoTorch, which is MIT licensed. We implemented all methods and benchmarks in this framework. The WM811K dataset \citep{wafermap,wu2015wafer} is available at \url{http://mirlab.org/dataSet/public/} and allows \emph{usage} under the following conditions (copied verbatim from the license document): 
\begin{verbatim}
    2.	Redistribution and use in any form must be 
    accompanied by the following two citations: 

	[1] Ming-Ju Wu, Jyh-Shing Roger Jang, and Jui-Long Chen, 
    "Wafer Map Failure Pattern Recognition and
    Similarity Ranking for Large-Scale Data Sets," 
    in IEEE Transactions on Semiconductor Manufacturing, 
    vol. 28, no. 1, pp. 1-12, Feb. 2015, doi: 10.1109/TSM.2014.2364237.

    [2] MIR-WM811K: Dataset for wafer map failure pattern recognition,
    2015 http://mirlab.org/dataset/public/
\end{verbatim}
We included these citations at the appropriate places. Further, we received written permission by the authors to use the dataset for our research.

\subsection{TPE implementation}
\paragraph{TPE implementation.}
For the TPE-based experiments, we approximate the parameter distribution $q_t(\theta)$ by a nonparametric bootstrap over the current BO history. Concretely, for each sample $s=1,\dots,S$, we resample the observed data with replacement, fit a TPE surrogate $\theta^{(s)}$, and evaluate the corresponding TPE acquisition
\[
h_{\mathrm{TPE}}(\lambda;\theta^{(s)}) \propto \frac{\ell_{\theta^{(s)}}(\lambda)}{g_{\theta^{(s)}}(\lambda)} .
\]
We use the empirical $\gamma$-quantile split with $\gamma=0.2$ to define the good and bad sets. 

As control variate, we use centered bootstrap summary statistics. For each bootstrap fit $\theta^{(s)}$, we form
\[
\phi(\theta^{(s)})=
\big[
\log b^{(s)}_{\mathrm{good},1},\dots,\log b^{(s)}_{\mathrm{good},d},
\log b^{(s)}_{\mathrm{bad},1},\dots,\log b^{(s)}_{\mathrm{bad},d},
y^{\star,(s)}
\big]^\top,
\]
where $b^{(s)}_{\mathrm{good},j}$ and $b^{(s)}_{\mathrm{bad},j}$ are the fitted KDE bandwidths in dimension $j$, and $y^{\star,(s)}$ is the bootstrap split threshold. The implemented control variate is then
\[
c_t(\theta^{(s)})=
\phi(\theta^{(s)})-\frac{1}{S}\sum_{r=1}^S \phi(\theta^{(r)}),
\]
which is zero-mean by construction. The orthogonalized TPE estimator is
\[
\widehat{\mathrm{EI}}^{\,c}_{\mathrm{TPE,orth}}(\lambda)
=
\frac{1}{S}\sum_{s=1}^S
\left(
h_{\mathrm{TPE}}(\lambda;\theta^{(s)})
-
\gamma_{\mathrm{TPE}}(\lambda)^\top c_t(\theta^{(s)})
\right).
\]

\subsection{Misspecified kernel family experiments}\label{appendix:misspecified_kernel_families_details}

\begin{table*}[htbp]
    \centering
    \small
    \setlength{\tabcolsep}{4pt}
    \renewcommand{\arraystretch}{1.15}
    \begin{tabularx}{\textwidth}{
        >{\raggedright\arraybackslash}p{2cm}
        >{\raggedright\arraybackslash}p{3.cm}
        >{\raggedright\arraybackslash}p{2cm}
        >{\raggedright\arraybackslash}p{2.75cm}
        >{\raggedright\arraybackslash}X
    }
        \toprule
        \textbf{Function}
        & \textbf{Qualitative structure}
        & \multicolumn{2}{c}{\textbf{Kernel family}}
        & \textbf{Reason for misspecification} \\
        \cmidrule(lr){3-4}
        &
        & \textbf{Mildly misspecified}
        & \textbf{Strongly misspecified}
        & \\
        \midrule

        Hartmann6
        & Smooth, anisotropic, interaction-heavy, multimodal
        & Isotropic RBF or isotropic Mat\'ern
        & Linear
        & Isotropic kernel variants miss dimension-specific lengthscales. Linear kernels cannot capture the nonlinear multimodal structure. \\

        \midrule
        Ackley8 
        & Highly multimodal, oscillatory, with many local minima
        & Smooth RBF
        & Linear
        & Very smooth kernels tend to smear out Ackley's rugged local structure. Linear kernels are too simple for the landscape. \\

        \midrule
        Michalewicz10
        & Sharp valleys, strong multimodality, narrow optima
        & Isotropic RBF or Mat\'ern
        & Linear
        & Isotropic kernels blur the narrow valleys and local optima. Linear kernels are strongly misspecified. \\

        \midrule
        Levy16
        & Rugged, oscillatory, with nonlinear interactions
        & Overly smooth isotropic RBF
        & Linear
        & Isotropic smooth kernels underfit sharper local variation and ignore anisotropy. Linear kernels cannot represent the nonlinear oscillatory structure. \\

        \bottomrule
    \end{tabularx}
    \caption{Synthetic benchmark functions, their qualitative structure, and examples of mildly and strongly misspecified kernel families.}
    \label{tab:function_wrong_kernels}
\end{table*}

\subsection{CNN outlier experiment}\label{sec:appendix_cnn_details}
We include the CNN outlier experiment used by \citet{Ament.2024}. Specifically, the goal is to optimize the test-set accuracy of a CNN trained on MNIST images. To model corrupted evaluations, we prematurely stop training with probability \SI{20}{\percent} after observing between 100 and 1000 training samples. We optimize the same parameters as \citet{Ament.2024}. For comparison, we also include their RPR method. To not implicitly mitigate the corrupted evaluations, we only use $n_0=2$ initial points. In initial experiments, we found that using larger $n_0$ made the problem \emph{too easy for all methods}; all reached $\sim$\SI{100}{\percent} test accuracy after few iterations.

\subsection{Case study experiment, additional details}\label{sec:appendix_case_study_details}

\textbf{Dataset.} For our real-world case study, we use the large-scale industrial WM811K dataset \citep{wafermap,wu2015wafer}. It contains \num{811457} real-world wafer map images and annotations of common failure types. The images have pixel values $0,1,2$, where $0$ is background, $1$ are OK regions, and $2$ are faulty regions. Depending on the clustering of the failures, one of eight class labels is assigned. Exemplary failure patterns are shown in Figure~\ref{fig:appendix_wm811k_examples}. Images without any failures are labelled with an additional class label. We resize all wafer maps to a common ($48 \times48$) resolution, rescale the data to $(0,1)$ by dividing by $/2$, and then normalize using ImageNet statistics.

\begin{figure}
    \centering
    \includegraphics[width=0.5\linewidth]{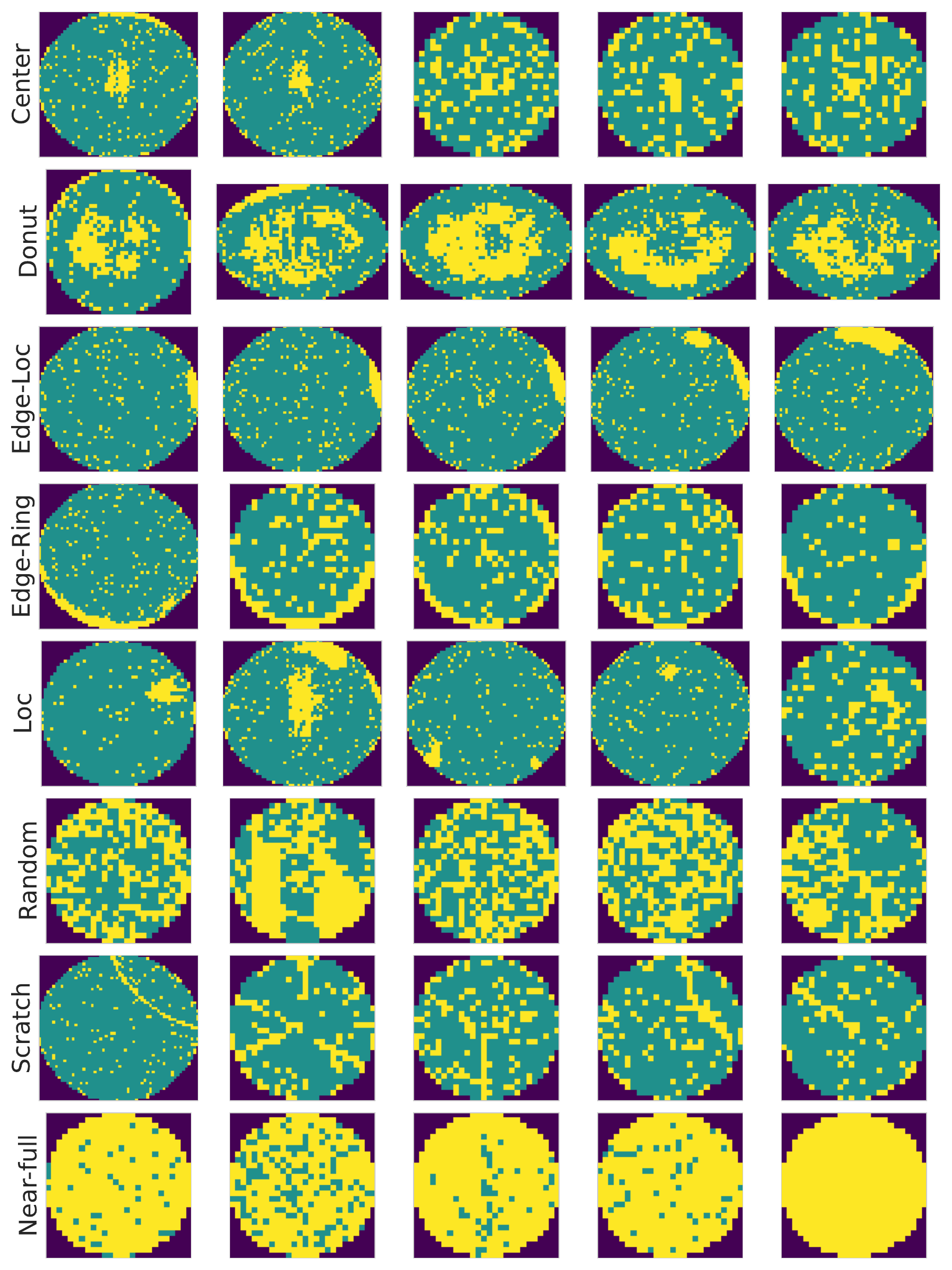}
    \caption{Examples of wafer-map manufacturing failures.}
    \label{fig:appendix_wm811k_examples}
\end{figure}

The dataset is strongly imbalanced, and we randomly split the data into a \SI{80}{\percent} train and \SI{20}{\percent} test set, stratifying on the labels to keep class distributions equal across the two splits. During fine-tuning, we use an (inverse) class-weighted cross-entropy loss.

\textbf{Model.} As a model, we use an ImageNet \citep{deng2009imagenet} pretrained ViT B16 \citep{dosovitskiy2020image} model. We freeze the backbone model and train only the penultimate output layers.

\textbf{Experimental settings}. For this experiment, due to computational constraints, we deviate from the settings used in the main paper. Specifically, we here use $S=512$, $n_0=4$, and run four trials of 20 iterations. We use a RBF kernel.

\textbf{Problem}. The goal is to optimize the fine-tuning test accuracy of a pretrained ViT B16 model on the industrial WM811K dataset. The tunable parameters are: learning rate in $[1e-4,1e-1]$, momentum in $[0,1]$, weight decay in $[0,1]$, step size (for the learning rate scheduling) in $[1,100]$, and $\gamma \in [0,1]$.\footnote{This experiment conceptually builds on \citeauthor{Ament.2024}'s experiment on CNN training.}

\newpage
\section{Further results}
\label{sec:appendix_results}

\subsection{Variance reduction and ranking stability}\label{sec:appendix_var_stability_results}
We estimate the empirical macro variance of the acquisition estimator by fixing a BO state, drawing a fixed set of $n=64$ Sobol probe samples, rebuilding the acquisition independently $16$ times, and computing the sample variance across repeated estimator values at each probe point. We report the mean pointwise macro variance for the raw and orthogonalized estimators. (Note: this does \emph{not} affect the fitted surrogate model, which stays untouched). We repeat this at fixed points throughout training. Due to space reasons, we only report abridged results in the main text. We report results for (i) a Mat\'ern-$5/2$ kernel in section \ref{sec:appendix_matern_analysis}, (ii) a RBF kernel in section \ref{sec:appendix_rbf_analysis}, and (iii) a TPE surrogate in section \ref{sec:appendix_tpe_analysis}.

\subsubsection{Mat\'ern-$5/2$ kernel}\label{sec:appendix_matern_analysis}
In Table~\ref{table:variance_reduction_full}, we give the variance reduction of \framework for all benchmark functions. For higher-dimensional Michalewicz10 and Levy16, we found that larger $n_0$ were required
Across all benchmarks, we observe a strong reduction in variance, confirming that our proposed \framework stabilizes the acquisition estimate. Further, we report the ranking stabilities in Table~\ref{table:ranking_stability_full}, We observe that \framework again reduces the variance and consistently stabilizes the probe ranking.

\begin{table}[htbp]
    \centering
    \caption{Variance reduction by orthogonalization.}
    \label{table:variance_reduction_full}
\adjustbox{max width=\textwidth}{%
\begin{tabular}{rrllllllll}
\toprule
&  & \multicolumn{2}{c}{Hartmann6} & \multicolumn{2}{c}{Ackley8} & \multicolumn{2}{c}{Michalewicz10} & \multicolumn{2}{c}{Levy16}\\
$n_0$ & $S$ & $\operatorname{Var}(\widehat{\mathrm{EI}}_m^{\mathrm{MC}})$ & $\operatorname{Var}(\widehat{\mathrm{EI}}_m^{\mathrm{ortho}})$ & $\operatorname{Var}(\widehat{\mathrm{EI}}_m^{\mathrm{MC}})$ & $\operatorname{Var}(\widehat{\mathrm{EI}}_m^{\mathrm{ortho}})$ & $\operatorname{Var}(\widehat{\mathrm{EI}}_m^{\mathrm{MC}})$ & $\operatorname{Var}(\widehat{\mathrm{EI}}_m^{\mathrm{ortho}})$ & $\operatorname{Var}(\widehat{\mathrm{EI}}_m^{\mathrm{MC}})$ & $\operatorname{Var}(\widehat{\mathrm{EI}}_m^{\mathrm{ortho}})$ \\
\midrule
32 & 8 & 2e-08 & 1.61e-07 & 9.68e-09 & 5.24e-09 & 9.51e-08 & 4.54e-08 & 0.00106 & 0.000709 \\
32 & 32 & 2.59e-09 & 3.02e-10 & 6.3e-09 & 4.17e-10 & 1.67e-08 & 1.07e-09 & 0.000208 & 2.27e-05 \\
\bottomrule
\end{tabular}}
\end{table}

\begin{table}[h]
    \centering
    \caption{Ranking stability. Lower probe variance and flip rate indicate more stable acquisition estimates; higher ranking correlation and top-1 agreement indicate more stable candidate rankings across probes.}
    \label{table:ranking_stability_full}
\begin{tabular}{lllllll}
\toprule
Benchmark & Method & Probe variance & Top1 agr. & Flip rate & Regret \\
\midrule
\multirow{2}{*}{Hartmann6} & qLogEI & $8.416_{15.43}$ & $0.925_{0.16}$ & $0.114_{0.12}$ & $1.411_{0.42}$ \\
 & OrthoBO (ours) & $0.038_{0.04}$ & $0.950_{0.10}$ & $0.051_{0.04}$ & $1.232_{0.67}$ \\
\midrule
\multirow{2}{*}{Ackley8} & qLogEI & $19.471_{39.29}$ & $0.958_{0.11}$ & $0.102_{0.12}$ & $17.864_{2.08}$ \\
 & OrthoBO (ours) & $0.017_{0.04}$ & $0.986_{0.06}$ & $0.021_{0.05}$ & $18.136_{0.78}$ \\
\midrule
\multirow{2}{*}{Michalewicz10} & qLogEI & $168.603_{154.76}$ & $0.925_{0.16}$ & $0.148_{0.12}$ & $7.140_{0.01}$ \\
 & OrthoBO (ours) & $0.019_{0.03}$ & $0.988_{0.06}$ & $0.014_{0.03}$ & $6.961_{0.36}$ \\
\midrule
\multirow{2}{*}{Levy16} & qLogEI & $233.655_{287.67}$ & $0.917_{0.12}$ & $0.069_{0.09}$ &  $82.194_{6.05}$ \\
 & OrthoBO (ours) & $0.073_{0.21}$ & $0.988_{0.06}$ & $0.048_{0.04}$ &  $58.938_{0.00}$\\
\bottomrule
\end{tabular}
\end{table}

\subsubsection{RBF kernel}\label{sec:appendix_rbf_analysis}
We repeat our previous analysis, but use RBF kernel for the GP surrogate. We report the raw and orthogonalized variances in Table~\ref{table:appendix_variance_reduction_rbf}. Our observations are consistent: orthogonalization substantially reduces estimation variance.

\begin{table}[h]
    \centering
        \caption{Variance reduction through orthogonalization, using a RBF kernel.}
    \label{table:appendix_variance_reduction_rbf}
\adjustbox{width=\linewidth}{
\begin{tabular}{rrllllllll}
\toprule
&  & \multicolumn{2}{c}{Hartmann6} & \multicolumn{2}{c}{Ackley8} & \multicolumn{2}{c}{Michalewicz10} & \multicolumn{2}{c}{Levy16}\\
$n_0$ & $S$ & $\operatorname{Var}(\widehat{\mathrm{EI}}_m^{\mathrm{MC}})$ & $\operatorname{Var}(\widehat{\mathrm{EI}}_m^{\mathrm{ortho}})$ & $\operatorname{Var}(\widehat{\mathrm{EI}}_m^{\mathrm{MC}})$ & $\operatorname{Var}(\widehat{\mathrm{EI}}_m^{\mathrm{ortho}})$ & $\operatorname{Var}(\widehat{\mathrm{EI}}_m^{\mathrm{MC}})$ & $\operatorname{Var}(\widehat{\mathrm{EI}}_m^{\mathrm{ortho}})$ & $\operatorname{Var}(\widehat{\mathrm{EI}}_m^{\mathrm{MC}})$ & $\operatorname{Var}(\widehat{\mathrm{EI}}_m^{\mathrm{ortho}})$ \\
\midrule
32 & 8 & 5.91e-08 & 1.91e-07 & 1.28e-08 & 7.19e-09 & 1.15e-07 & 4.56e-08 & 0.00129 & 0.000758 \\
32 & 32 & 8.62e-09 & 7.3e-10 & 3.71e-09 & 4e-10 & 2.61e-08 & 1.76e-09 & 0.000301 & 2.85e-05 \\
\bottomrule
\end{tabular}}
\end{table}

\begin{table}[h]
    \centering
    \caption{Ranking stability, \textbf{RBF kernel}. Lower probe variance and flip rate indicate more stable acquisition estimates; higher ranking correlation and top-1 agreement indicate more stable candidate rankings across probes.}
    \label{tab:placeholder}
\begin{tabular}{lllllll}
\toprule
 Benchmark &  Method  &  Probe variance & Top1 agr. & Flip rate & Regret \\
\midrule
\multirow{2}{*}{Hartmann6} & qLogEI & $5.373_{11.64}$ & $0.944_{0.15}$ & $0.085_{0.10}$ & $0.126_{0.00}$ \\
 & OrthoBO (ours) & $0.031_{0.04}$ & $0.986_{0.08}$ & $0.031_{0.04}$ & $0.125_{0.00}$ \\
\midrule
\multirow{2}{*}{Ackley8} & qLogEI & $10.051_{31.34}$ & $0.986_{0.06}$ & $0.052_{0.10}$ & $12.476_{2.06}$ \\
 & OrthoBO (ours) & $0.013_{0.04}$ & $0.993_{0.04}$ & $0.019_{0.04}$ & $12.581_{4.03}$ \\
\midrule
\multirow{2}{*}{Michalewicz10} & qLogEI & $55.138_{100.54}$ & $0.931_{0.15}$ & $0.085_{0.09}$ & $4.517_{0.20}$ \\
 & OrthoBO (ours) & $0.002_{0.01}$ & $1.000_{0.00}$ & $0.031_{0.04}$ & $4.004_{0.52}$ \\
\midrule
\multirow{2}{*}{Levy16} & qLogEI & $82.913_{154.71}$ & $0.889_{0.18}$ & $0.201_{0.10}$ & $3.109_{0.57}$ \\
 & OrthoBO (ours) & $0.034_{0.17}$ & $1.000_{0.00}$ & $0.049_{0.04}$ & $2.819_{0.57}$ \\
\bottomrule
\end{tabular}
\end{table}

\subsubsection{TPE-based surrogate}\label{sec:appendix_tpe_analysis}
In Table~\ref{table:appendix_tpe_variance_reduction}, we show the variance reduction from \framework for a TPE surrogate, using a MC budget of $S=32$. Our observations are in line with the GP surrogate: \framework can substantially reduce the variance.

\begin{table}[h]
    \centering
    \tiny
    \caption{Variance reduction by orthogonalization, for a TPE surrogate.}
    \label{table:appendix_tpe_variance_reduction}
    \adjustbox{max width=\textwidth}{%
    \begin{tabular}{rrrrrrrr}
\toprule
 \multicolumn{2}{c}{Hartmann6} & \multicolumn{2}{c}{Ackley8} & \multicolumn{2}{c}{Michalewicz10} & \multicolumn{2}{c}{Levy16}\\
$\operatorname{Var}(\mathrm{EI}^{\mathrm{TPE}})$ & $\operatorname{Var}(\widehat{\mathrm{EI}}^{\mathrm{TPE,orth}})$ & $\operatorname{Var}(\mathrm{EI}^{\mathrm{TPE}})$ & $\operatorname{Var}(\widehat{\mathrm{EI}}^{\mathrm{TPE,orth}})$ & $\operatorname{Var}(\mathrm{EI}^{\mathrm{TPE}})$ & $\operatorname{Var}(\widehat{\mathrm{EI}}^{\mathrm{TPE,orth}})$ & $\operatorname{Var}(\mathrm{EI}^{\mathrm{TPE}})$ & $\operatorname{Var}(\widehat{\mathrm{EI}}^{\mathrm{TPE,orth}})$ \\
\midrule
$11.222_{16.405}$ & $1.214_{2.620}$ & $13.182_{18.233}$ & $1.284_{2.601}$ & $21.712_{25.982}$ & $2.197_{3.358}$ & $20.944_{26.200}$ & $1.399_{2.259}$ \\
\bottomrule
\end{tabular}}
\end{table}

\subsection{Misspecified surrogate families}
We first study robustness to surrogate misspecification. In practical BO, the surrogate family is only an approximation to the unknown objective, and kernel hyperparameters are often estimated from limited data. As a result, the posterior quantities used inside MC acquisition estimates can be noisy or biased, which may destabilize candidate rankings. This experiment evaluates whether the variance reduction induced by \framework improves BO decisions under such challenging, but realistic, surrogate conditions.

We consider four standard synthetic regression problems with known global optima and varying dimensionality: Hartmann6, Ackley8, Michalewicz10, and Levy16 \citep[e.g.,][]{Ament.2023,Bodin.2020,eriksson2021scalable,moss2023inducing}. Details of the benchmark functions and surrogate choices are provided in Table~\ref{tab:function_wrong_kernels}. We compare methods using best-so-far regret.

\textbf{Mild misspecification: isotropic RBF kernel.}
We first use an isotropic RBF kernel, a common smooth GP prior. For anisotropic and multimodal objectives it is mildly misspecified because it uses a single lengthscale across all dimensions. (The dimension of the benchmark functions varies from $6$ to $16$.) The results are in Fig.~\ref{fig:kernel_rbf}.

\begin{figure}[h]
    \centering
    \begin{subfigure}{0.245\textwidth}
    \centering
    \includegraphics[width =\textwidth]{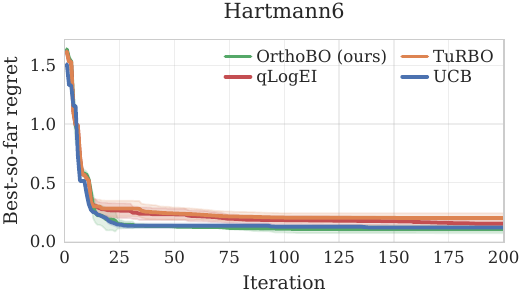}
    \end{subfigure}
    \begin{subfigure}{0.245\textwidth}
    \centering
    \includegraphics[width = \textwidth]{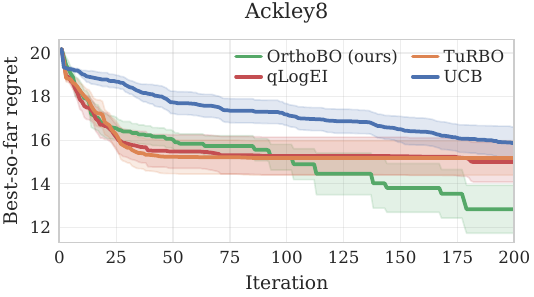}
    \end{subfigure}
    \begin{subfigure}{0.245\textwidth}
    \centering
    \includegraphics[width = \textwidth]{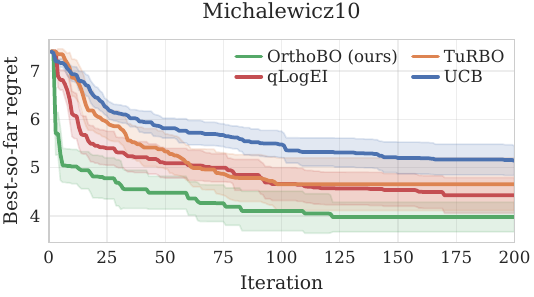}
    \end{subfigure}
    \begin{subfigure}{0.245\textwidth}
    \centering
    \includegraphics[width = \textwidth]{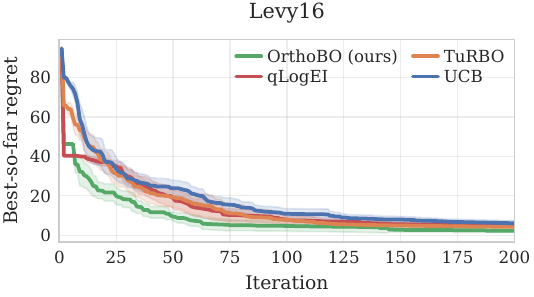}
    \end{subfigure}
\vspace{-0.5cm}
\caption{Best-so-far regret as a function of iterations on four widely used regression problems, using a \textbf{mildly misspecified isotropic RBF kernel}. Our proposed \framework is competitive, and in higher-dimensional functions (Michalewicz10, Levy16) outperforms the baselines.}
\label{fig:kernel_rbf}
\end{figure}

\textbf{Results.} Under  mild surrogate misspecification, our proposed \framework remains competitive across all benchmarks. On Hartmann6 and Ackley8, it improves over qLogEI and reaches the lowest regret. On the higher-dimensional Michalewicz10 and Levy16 functions, \framework achieves the lowest regret and improves substantially earlier than the baselines. For example, \framework reaches a regret of $\sim5$ after 5 iterations, whereas the baseline methods require at least 50 to 60 \emph{additional iterations}. Together, these results indicate that variance reduction at the acquisition-estimation level improves the reliability of BO decisions under realistic surrogate mismatch, with the largest gains appearing in higher-dimensional settings where candidate rankings are more sensitive to estimation noise.

\textbf{Strong misspecification: linear kernel.}
We next consider a more challenging setting by using a linear kernel. This surrogate is unable to represent the nonlinear and multimodal structure of the benchmark functions, which amplifies the effect of posterior and acquisition-estimation errors. We therefore test whether stabilizing the acquisition estimate can still improve BO decisions when the surrogate family is substantially imperfect. The results are shown in Fig.~\ref{fig:kernel_linear}.

\begin{figure}[h]
    \centering
    \begin{subfigure}{0.24\textwidth}
    \centering
    \includegraphics[width = \textwidth]{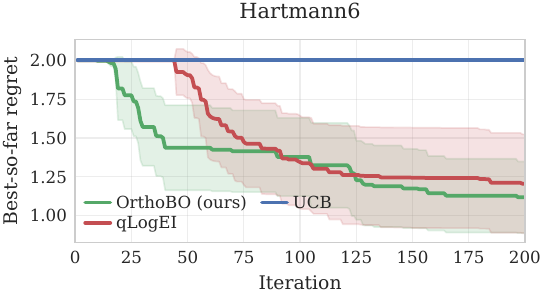}
    \end{subfigure}
    \begin{subfigure}{0.24\textwidth}
    \centering
    \includegraphics[width = \textwidth]{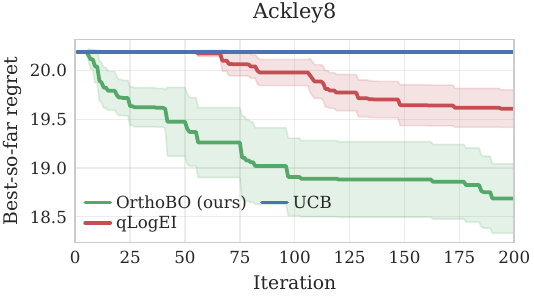}
    \end{subfigure}
    \begin{subfigure}{0.24\textwidth}
    \centering
    \includegraphics[width = \textwidth]{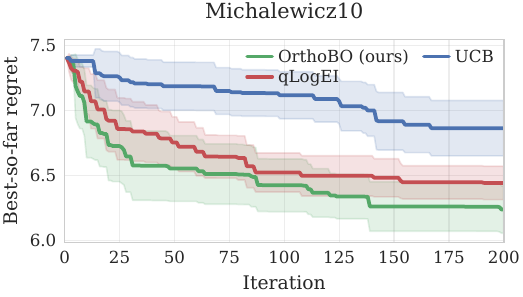}
    \end{subfigure}
    \begin{subfigure}{0.24\textwidth}
    \centering
    \includegraphics[width = \textwidth]{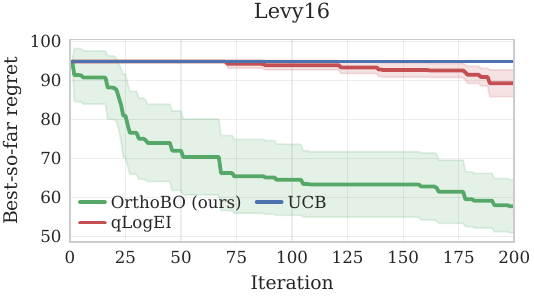}
    \end{subfigure}
\caption{Best-so-far regret as a function of iterations on four widely used regression problems, using a \textbf{strongly misspecified linear kernel}. Our proposed \framework substantially outperforms other methods. The performance difference increases as the problem dimensionality grows. Notably, on Hartmann6, qLogEI takes five times more iterations than \framework to reach the same level of regret.}
\label{fig:kernel_linear}
\end{figure}

\textbf{Results.} We observe: (i) On all regression problems, our \framework outperforms the baselines. (ii) on Hartmann6, \framework reaches the same regret as qLogEI in half the time. (iii) On Levy16, \framework is the only method to substantially improve over time. Together, the results demonstrate the orthogonalization can successfully overcome strong misspecifications of the surrogate family.

With the strongly misspecified linear kernel, \framework outperforms the baselines on all four functions. On Hartmann6, it reaches the same regret level as qLogEI with substantially fewer evaluations. On Levy16, \framework is the only method that shows a clear improvement over the optimization horizon. Together, these results indicate that acquisition-estimation noise can be a significant failure mode when the surrogate is imperfect, and that orthogonalization helps by stabilizing the acquisition values used to rank candidates. Thus, \framework does not require a perfectly specified surrogate to improve BO performance; rather, it provides a variance-reduction mechanism that is especially useful when MC acquisition estimates become unreliable.

\subsection{Weakly fitted hyperparameters}
We next study a data-scarce regime in which the surrogate family is expressive, but its hyperparameters are weakly identified. This situation is common in expensive HPO problems: only a small initial design is available, yet the GP must already estimate lengthscales, output scale, and noise parameters that determine the posterior and hence the acquisition values. Uncertainty or instability in these hyperparameters can therefore propagate to the acquisition function and distort candidate rankings.
\vspace{-0.2cm}
\begin{wrapfigure}[16]{r}{0.5\linewidth}
    \centering
    \includegraphics[width=\linewidth]{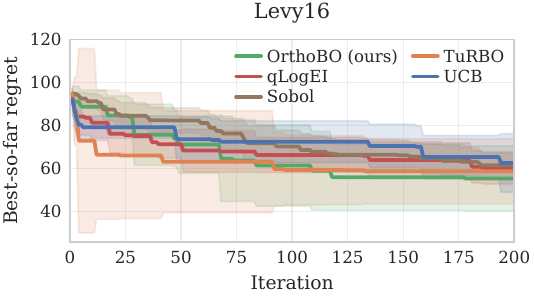}
    \caption{Best-so-far regret on Levy16 with a Mat\'ern-$5/2$ ARD kernel and only $n_0=4$ initial Sobol points. \framework remains among the strongest methods and improves steadily despite weakly identified surrogate hyperparameters.}
    \label{fig:weak_fit_levy16}
\end{wrapfigure}

To isolate this effect, we use a GP with a Mat\'ern-$5/2$ kernel and ARD. Unlike the isotropic kernels studied above, ARD assigns a separate lengthscale to each input dimension. This makes the surrogate more flexible, but also harder to fit when the initial design is small. We therefore initialize BO with only $n_0=4$ Sobol points and evaluate performance on Levy16, which is particularly challenging in this setting because the surrogate must estimate 16 lengthscales and a noise parameter from very limited data. The results are shown in Fig.~\ref{fig:weak_fit_levy16}.

\textbf{Results.} All methods initially exhibit high regret, reflecting the difficulty of fitting an ARD surrogate from only four initial observations. Nevertheless, \framework decreases regret steadily over the optimization horizon and reaches one of the lowest final regrets among the compared methods. These results show that orthogonalization remains useful even when the kernel class is flexible but its hyperparameters are poorly estimated. The gains are consistent with the proposed mechanism: reducing sensitivity of the acquisition estimate to surrogate-posterior perturbations yields more reliable candidate rankings in early, data-limited BO.

\subsection{Ensembling}\label{sec:appendix_ensemble}
In the main experiments, we used $M=1$ to avoid confounding the effect of orthogonalization with mere performance gains from a larger ensemble. We now use an ensemble of $M=3$ models, each with a different kernel: linear, RBF, and Mat\'ern-$5/2$. We increased the number of iterations by \SI{25}{\percent} to account for additional fitting iterations for the models. We compute $\ell_{m,t}$ from the GP surrogate posteriors, set $\tau = 1$, and use $\delta=0.001$ as a minimum weight floor.

The results for ensembling are in Figure~\ref{fig:ensembling}. 

Further, we compute (i) the variance reduction, (ii) ranking stability metrics, and (iii) weighting metrics. These additional measures are computed on a Sobol probe. We present the results in Table~\ref{tab:appendix_ensemble_variance} and Figure~\ref{fig:appendix_ensembling_entropy}.

\begin{figure}[h]
    \centering
    \includegraphics[width=0.5\linewidth]{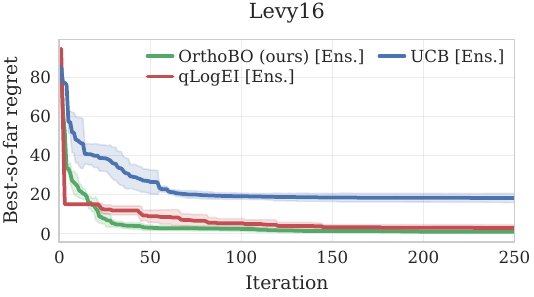}
    \caption{Results for ensembling.}
    \label{fig:ensembling}
\end{figure}

\begin{table}[h]
    \centering
    \caption{Variance reduction for ensembling on Levy16.}
    \label{tab:appendix_ensemble_variance}
\adjustbox{max width=\textwidth}{%
\begin{tabular}{lllllllll}
\toprule
Method & Probe variance & Top1 agr. & Flip rate & Regret & Entropy & Weight 1 & Weight 2 & Weight 3 \\
\midrule
UCB [Ens.] & $39.255_{48.64}$ & $1.000_{0.00}$ & $0.026_{0.04}$ & $18.267_{2.40}$ & $0.163_{0.18}$ & $0.008_{0.06}$ & $0.062_{0.14}$ & $0.933_{0.14}$ \\
qLogEI [Ens.]  & $390.598_{1033.83}$ & $0.875_{0.20}$ & $0.167_{0.10}$ & $2.926_{1.66}$ & $0.145_{0.16}$ & $0.014_{0.07}$ & $0.073_{0.21}$ & $0.917_{0.21}$ \\
\framework (Ours) [Ens.] & $0.039_{0.12}$ & $0.958_{0.14}$ & $0.081_{0.08}$ & $0.941_{0.54}$ & $0.169_{0.23}$ & $0.008_{0.06}$ & $0.069_{0.14}$ & $0.927_{0.14}$ \\

\bottomrule
\end{tabular}}
\end{table}

\textbf{Results}. We observe: (1) \framework achieves the lowest regret. (iii) \framework has substantially lower Sobol probing variance. (iii) Weight are highest for the Mat\'ern-$5/2$ kernel (``Weight 3'') demonstrating that all methods primarily focus the ``most correct'' kernel. We plot the evolution of the entropy over time in Figure~\ref{fig:appendix_ensembling_entropy}, which shows that the weight aggregation scheme balances exploration and exploitation.

\begin{figure}[h]
    \centering
    \includegraphics[width=0.5\linewidth]{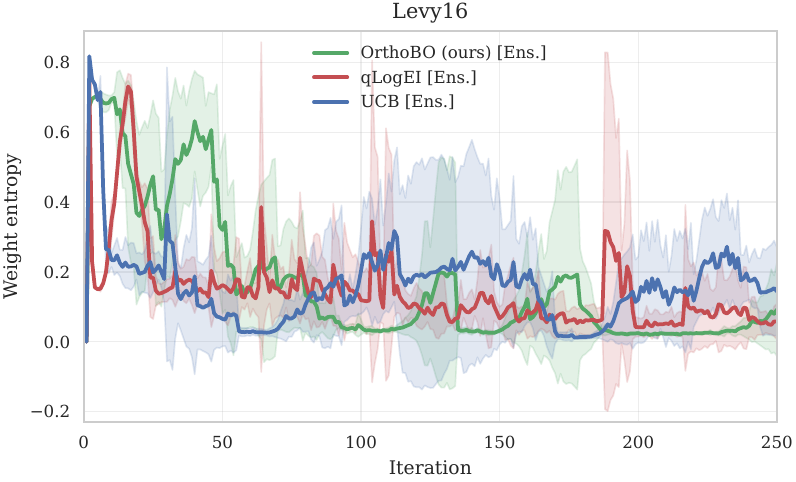}
    \caption{Evolution of the entropy across ensembling weights. For all methods, the used tempered exponentially weighted aggregation scheme is effective in balancing exploration and exploitation at the \emph{ensemble} level.}
    \label{fig:appendix_ensembling_entropy}
\end{figure}

\newpage
\subsection{Monte Carlo sampling ablation experiment}
We here study the effect of the MC budget used to estimate the acquisition function. Here, finite-sample acquisition noise can change candidate rankings, with the effect becoming more pronounced for small sampling budgets $S$ \citep[cf.][]{Ament.2023}. This setting is practically relevant because acquisition functions are evaluated many times during optimization, so increasing $S$ can be computationally expensive. We therefore test whether the variance reduction of \framework improves BO performance when acquisition estimates must be computed from limited MC samples.

We use an isotropic RBF kernel, thereby combining moderate surrogate mismatch with reduced acquisition-estimation budgets. We vary $S \in \{64,128,256,512\}$. The main results for Ackley8 and Michalewicz10 are shown in Fig.~\ref{fig:mc_ablation}; additional results are provided in Fig.~\ref{fig:mc_ablation_full}.

\begin{figure}[htbp]
    \centering
    \includegraphics[width=\textwidth]{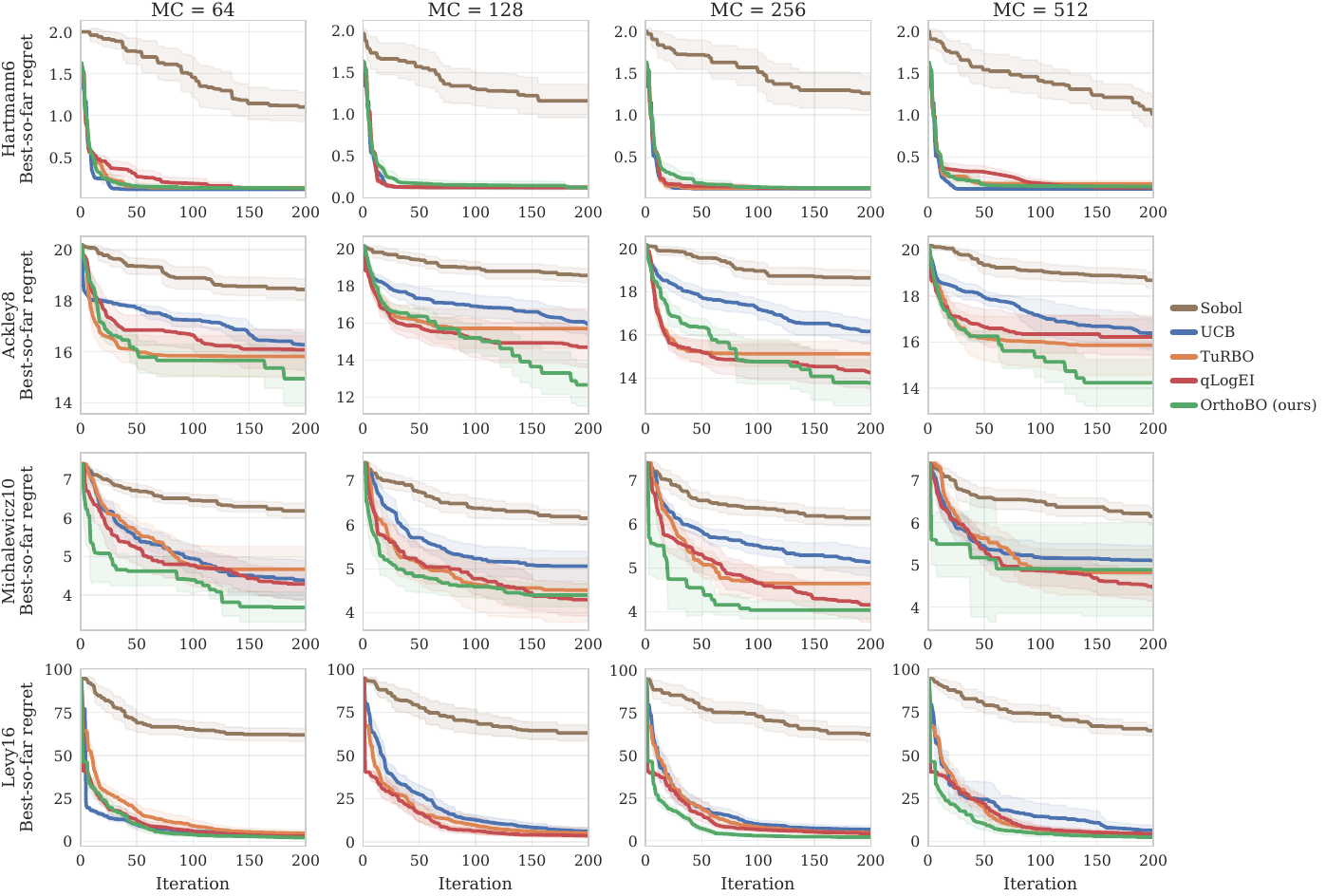}
    \caption{Different mc samples}  
    \label{fig:mc_ablation_full}
\end{figure}

\textbf{Results} Reducing the MC budget makes the performance of acquisition-based methods more sensitive to estimation noise. Across the tested budgets, \framework remains competitive on Ackley8 and consistently achieves low regret on Michalewicz10. The effect of orthogonalization are most visible at earlier iterations, where MC estimation noise has a larger effect on the acquisition values and hence on the induced candidate rankings. Overall, these results support the central mechanism of \framework: orthogonalization improves the reliability of acquisition estimates when MC budgets are limited, leading to more stable BO decisions and thus lower regret. 

\subsection{CIFAR10}
We again utilize the outlier experiment by \citet{Ament.2024}. However, this time we utilize the more challenging CIFAR10 dataset on a modified CNN architecture. To model corrupted evaluations, we prematurely stop training with probability \SI{20}{\percent} after observing between 100 and 1000 training samples. We optimize the same parameters as \citet{Ament.2024}. To not implicitly mitigate the corrupted evaluations, we only use $n_0=2$ initial points. The results are in Figure~\ref{fig:appendix_cifar10}. \framework achieves the best performance, despite 20\% corruptions during training.

\begin{figure}
    \centering
    \includegraphics[width=0.75\linewidth]{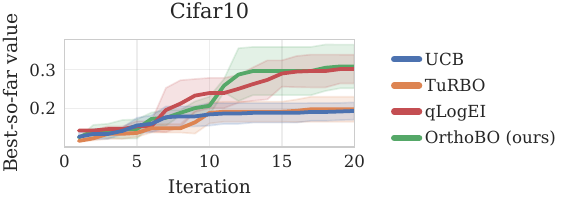}
    \caption{Results for HPO optimization for training on CIFAR10 from scratch.}
    \label{fig:appendix_cifar10}
\end{figure}

\end{document}